\newtheorem{example}{Example}
\newtheorem{theorem}{Theorem}
\newtheorem{corollary}{Corollary}
\newtheorem{lemma}{Lemma}
\newtheorem{definition}{Definition}
\newtheorem{proposition}{Proposition}
\newtheorem{remark}{Remark}
\newtheorem{assumption}{Assumption}
\newtheorem*{proof sketch}{Proof sketch}
\definecolor{edgecolor}{RGB}{215,215,215}
\let\bot\Vbar 
\newdimen\arrowsize
\tikzset{block/.style={draw, thick, text width=2cm , minimum height=1.3cm, align=center}, 
	line/.style={-latex}   
}
\title{Identification of Causal Structure in the Presence of Missing Data with Additive Noise Model}
\author{
%Authors
% All authors must be in the same font size and format.
Jie Qiao\textsuperscript{\rm 1},
Zhengming Chen\textsuperscript{\rm 1,3},
Jianhua Yu\textsuperscript{\rm 1},
Ruichu Cai\textsuperscript{\rm 1,2}\thanks{Corresponding author.},
Zhifeng Hao\textsuperscript{\rm 4}
}
\begin{document}

\maketitle

\begin{abstract}
	
	Missing data are an unavoidable complication frequently encountered in many causal discovery tasks. 
	When a missing process depends on the missing values themselves (known as \textit{self-masking missingness}), the recovery of the joint distribution becomes unattainable, and detecting the presence of such self-masking missingness remains a perplexing challenge. Consequently, due to the inability to reconstruct the original distribution and to discern the underlying missingness mechanism, simply applying existing causal discovery methods would lead to wrong conclusions. In this work, we found that the recent advances additive noise model has the potential for learning causal structure under the existence of the self-masking missingness. With this observation, we aim to investigate the identification problem of learning causal structure from missing data under an additive noise model with different missingness mechanisms, where the `no self-masking missingness' assumption can be eliminated appropriately. 
	Specifically, we first elegantly extend the scope of identifiability of causal skeleton to the case with weak self-masking missingness (i.e., no other variable could be the cause of self-masking indicators except itself). We further provide the sufficient and necessary identification conditions of the causal direction under additive noise model and show that the causal structure can be identified up to an IN-equivalent pattern. We finally propose a practical algorithm based on the above theoretical results on learning the causal skeleton and causal direction. Extensive experiments on synthetic and real data demonstrate the efficiency and effectiveness of the proposed algorithms.
	
\end{abstract}

\section{Introduction}
Missing data are ubiquitous in many fields and causal discovery from missing observational data is challenging due to possible complex \textit{missingness mechanisms}\textemdash causal structure among the causal variables and its missingness indicators. "indicator' is variables that specify whether a value was missing for an observation. With different structures of the missingness mechanisms, following \citet{rubin1976inference}, the missingness types can be categorized as Missing Completely At Random (MCAR), Missing At Random (MAR), and Missing Not At Random (MNAR). 

When data are MCAR, the causal variables, and their missingness indicators are independent such that the missingness mechanism is ignorable, and one can perform the listwise deletion that simply drops the samples with missing value \cite{strobl2018fast}. However, for MNAR, the missingness mechanism is not ignorable, and the simple deletion would introduce bias resulting in incorrect inference \cite{rubin2004multiple,mohan2013graphical,tu2019causal}. Much effort has been made in studying the recoverability of the MNAR data to correct the spurious correlations from deleted-wised distribution using reweighting by incorporating prior information of the causal graph and missingness mechanisms \cite{bhattacharya2020identification,mohan2021graphical,nabi2020full}. However, when a missing process depends on the missing values themselves (known as \textit{self-masking} belonging to MNAR), it suffers from a serious identifiability issue such that the origin distribution is not recoverable \cite{nabi2020full}.

Existing works, therefore, mainly focus on causal discovery with no self-masking missingness mechanisms. On one hand, \cite{gao2022missdag} addresses the M(C)AR cases using the identifiable additive noise models (ANMs) \cite{hoyer2008nonlinear,shimizu2006linear,peters2014identifiability} with a general EM-based framework to perform causal discovery in the presence of missing data. On the other hand, to learn causal structure in MNAR, some constraints of the missingness mechanisms must be made. With the no self-masking assumption, \citet{gain2018structure} proposes to extend the constraint-based PC algorithm \cite{spirtes2000causation} by correcting the bias of each conditional independence (CI) test bought by the missing value. While, instead of correcting CI test, \citet{tu2019causal} proposes a more efficient post-correction strategy to correct the spurious edges produced by the original PC algorithm. 

However, due to the limited recoverability, it is still unclear how to identify causal structure in the presence of the self-masking missingness mechanism, which is believed to be the most commonly encountered in practice \cite{osborne2013best}, e.g., smokers do not answer any questions about their smoking behavior in insurance applications, and people with very high or low income do not disclose their income \cite{mohan2018handling}. 

In this work, we found that the recent advances additive noise model has the potential for learning causal structure under the existence of the self-masking missingness mechanism. Take Fig. \ref{fig:anm_selfmasking_example} as an example, suppose a causal pair $R_X\leftarrow X\to Y$ where $X$ is a self-masking missing variable and $R_X$ is the missingness indicator of $X$. Although we can only access the distribution of $P(X|R_X=0)$, the causal direction is still identifiable since the independence noise $X\Vbar \epsilon_Y$ still holds\textemdash considering each separate part follows an identifiable ANM. In contrast, suppose $X\to Y \to R_Y$ as shown in Fig. \ref{fig:illust_simple_example}(a), unfortunately, ANM is not identifiable and the independence noise will not hold in this case since we can only access the distribution in $R_Y=0$ such that $X \not\Vbar \epsilon_Y|R_Y$ ($R_Y$ is the descendent of the collider $Y$). Nevertheless, if the self-masking of $Y$ is known, we can still identify such causal direction since we know that such dependence noise is brought by $R_Y$, otherwise, the causal direction can be identified in the first place. With this observation, in this paper, we aim to address the following two questions: 1) How to identify the self-masking missingness mechanism in learning causal skeleton? 2) How to identify the causal direction when ANM is not identifiable due to the self-masking missingness?

In answering the first question, we relax the assumptions in MVPC \cite{tu2019causal} to allow the existence of the weak self-masking missingness and propose a post-correction strategy to correct the spurious edges produced by the self-masking missingness. 
For the second question, by giving the causal skeleton, we propose an equivalent class for ANM and develop a searching method to identify the causal direction from the equivalent class.
Our main contributions are summarized as follows: 1) we propose a practical causal skeleton learning algorithm for learning the skeleton among causal variables and the missingness indicators in the presence of weak self-masking missingness. 2) We provide a theoretical analysis of the identification of the additive noise model under missing data. 3) Combine with the causal skeleton and additive noise model, we show that the causal structure can be identified up to an IN-equivalent pattern. 4) We provide some insightful orientation rules to further orient the IN-equivalent pattern.

\begin{figure}
	\centering
	\subfigure{
		\label{fig:anm_selfmasking_example_a}
		\includegraphics[width=0.225\textwidth]{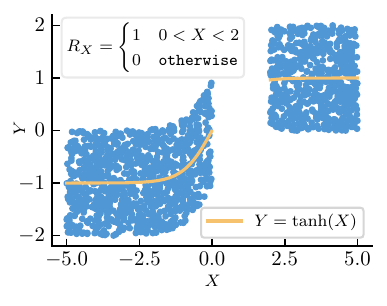}}
	\subfigure{
		\label{fig:anm_selfmasking_example_b}
		\includegraphics[width=0.225\textwidth]{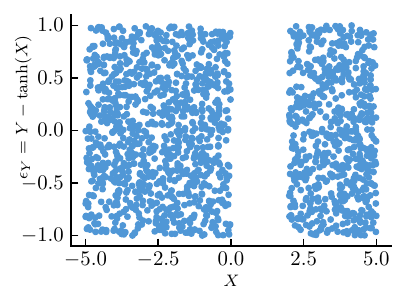}}
	%\vspace{-0.5cm}
	\caption{Illustration of self-masking missingness ANM in which $X$ is the self masking missing variable satisfying $R_x\leftarrow X\rightarrow Y$. Panels left show the scatter plot of $X$ and $Y=\tanh(X)+N$ and Panels right show the corresponding noise of ANM.}
	\label{fig:anm_selfmasking_example}
\end{figure}
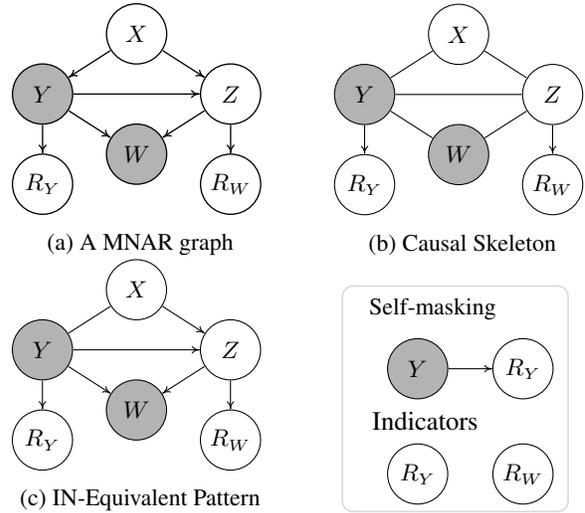
\begin{figure}[t]
	\center
	\begin{tikzpicture}[scale=1.0, line width=0.5pt, inner sep=0.2mm, shorten >=.1pt, shorten <=.1pt]
		
		\draw (1.75, 0.8) node(X) [circle, minimum size=0.8cm, draw] {{\footnotesize\,$X$\,}};

		\draw (0.5, 0) node(Y) [circle, fill=gray!60, minimum size=0.8cm, draw] {{\footnotesize\,$Y$\,}};
		
		\draw (3, 0) node(Z) [circle, minimum size=0.8cm, draw] {{\footnotesize\,$Z$\,}};
		
		\draw (1.75, -0.8) node(W) [circle,fill=gray!60, minimum size=0.8cm, draw] {{\footnotesize\,$W$\,}};
		
		%For Ry
		\draw (0.5, -1.2) node(Ry) [circle, minimum size=0.8cm,draw] {{\footnotesize\,$R_Y$\,}};
		
		%For Rw
		
		\draw (3, -1.2) node(Rw) [circle,minimum size=0.8cm, draw] {{\footnotesize\,$R_W$\,}};

		\draw[-arcsq] (X) -- (Y) node[pos=0.5,sloped,above] {};
		\draw[-arcsq] (X) -- (Z) node[pos=0.5,sloped,above] {};
		
		\draw[-arcsq] (Y) -- (W) node[pos=0.5,sloped,above] {}; 
		\draw[-arcsq] (Z) -- (W) node[pos=0.5,sloped,above] {}; 
		\draw[-arcsq] (Y) -- (Z) node[pos=0.5,sloped,above] {};
		\draw[-arcsq] (Y) -- (Ry) node[pos=0.5,sloped,above] {};
		\draw[-arcsq] (Z) -- (Rw) node[pos=0.5,sloped,above] {};
		
		\draw (1.8, -2) node(ii) [] {{\footnotesize\,(a) A MNAR graph\,}};
	\end{tikzpicture}~~~~~~~~~~~~
	\begin{tikzpicture}[scale=1.0,inner sep=0.2mm, shorten >=.1pt, shorten <=.1pt]
		% \begin{tikzpicture}[scale=1.2, line width=0.5pt, inner sep=0.2mm, shorten >=.1pt, shorten <=.1pt]
			\draw (1.75, 0.8) node(X) [circle, minimum size=0.8cm, draw] {{\footnotesize\,$X$\,}};

			\draw (0.5, 0) node(Y) [circle, fill=gray!60, minimum size=0.8cm, draw] {{\footnotesize\,$Y$\,}};
			
			\draw (3, 0) node(Z) [circle, minimum size=0.8cm, draw] {{\footnotesize\,$Z$\,}};
			
			\draw (1.75, -0.8) node(W) [circle,fill=gray!60, minimum size=0.8cm, draw] {{\footnotesize\,$W$\,}};
			
			%For Ry
			\draw (0.5, -1.2) node(Ry) [circle,minimum size=0.8cm, draw] {{\footnotesize\,$R_Y$\,}};
			
			%For Rw
			
			\draw (3, -1.2) node(Rw) [circle, minimum size=0.8cm,draw] {{\footnotesize\,$R_W$\,}};

			\draw[-] (Y) -- (X) node[pos=0.5,sloped,above] {};
			\draw[-] (X) -- (Z) node[pos=0.5,sloped,above] {};
			
			\draw[-] (Y) -- (W) node[pos=0.5,sloped,above] {}; 
			\draw[-] (Z) -- (W) node[pos=0.5,sloped,above] {}; 
			\draw[-] (Y) -- (Z) node[pos=0.5,sloped,above] {};
			\draw[-arcsq] (Y) -- (Ry) node[pos=0.5,sloped,above] {};
			\draw[-arcsq] (Z) -- (Rw) node[pos=0.5,sloped,above] {};
			\draw (1.8, -2) node(ii) [] {{\footnotesize\,(b) Causal Skeleton\,}};
			
		\end{tikzpicture}~~~~~~~~~~~~\\
		\begin{tikzpicture}[scale=1.0,inner sep=0.2mm, shorten >=.1pt, shorten <=.1pt]
			\draw (1.75, 0.8) node(X) [circle, minimum size=0.8cm, draw] {{\footnotesize\,$X$\,}};

			\draw (0.5, 0) node(Y) [circle, fill=gray!60, minimum size=0.8cm, draw] {{\footnotesize\,$Y$\,}};
			
			\draw (3, 0) node(Z) [circle, minimum size=0.8cm, draw] {{\footnotesize\,$Z$\,}};
			
			\draw (1.75, -0.8) node(W) [circle,fill=gray!60, minimum size=0.8cm, draw] {{\footnotesize\,$W$\,}};
			
			%For Ry
			\draw (0.5, -1.2) node(Ry) [circle,minimum size=0.8cm, draw] {{\footnotesize\,$R_Y$\,}};
			
			%For Rw
			
			\draw (3, -1.2) node(Rw) [circle,minimum size=0.8cm, draw] {{\footnotesize\,$R_W$\,}};

			\draw[-] (X) -- (Y) node[pos=0.5,sloped] {};
			\draw[-arcsq] (X) -- (Z) node[pos=0.5,sloped,above] {};
			
			\draw[-arcsq] (Y) -- (W) node[pos=0.5,sloped,above] {}; 
			\draw[-arcsq] (Z) -- (W) node[pos=0.5,sloped,above] {}; 
			\draw[-arcsq] (Y) -- (Z) node[pos=0.5,sloped,above] {};
			\draw[-arcsq] (Y) -- (Ry) node[pos=0.5,sloped,above] {};
			\draw[-arcsq] (Z) -- (Rw) node[pos=0.5,sloped,above] {};
			\draw (1.8, -2) node(ii) [] {{\footnotesize\,(c) IN-Equivalent Pattern\,}};
		\end{tikzpicture}~~~~~~~~~~~~
		\begin{tikzpicture}[scale=1.0,inner sep=0.2mm, shorten >=.1pt, shorten <=.1pt]
			
			%\draw   (80,87) -- (150,87) -- (150,127) -- (80,127) -- cycle ;
			% \draw   (0,1) -- (2,0) -- (2,3) -- (0,3) -- cycle ;
			%\draw (-1,-1) rectangle (1,1);
			\draw  [draw = edgecolor, shift = {(0.3,-0.2)},rounded corners,line width=0.2mm] (-1.5,-1.5) rectangle (1.5,1.5);
			
			\draw (0, 1.0) node(iii) [] {{\footnotesize\, Self-masking \,}};
			
			\draw (-0.2, 0.2) node(Y) [circle, fill=gray!60, minimum size=0.8cm, draw] {{\footnotesize\,$Y$\,}};
			
			%For Ry
			\draw (1.2, 0.2) node(Ry) [circle, minimum size=0.8cm,draw] {{\footnotesize\,$R_Y$\,}};
			
			%indicator
			\draw (-0.1, -0.5) node(ii) [] {{Indicators}};
			
			% 		 \draw (1.0, -1.2) node(V) [circle,minimum size=0.8cm, draw] {{\footnotesize\, $R_Y$}};
			
			% 		 \draw (2.0, -1.2) node(Vi) [circle,minimum size=0.8cm, draw] {{ \footnotesize\,$R_W$}};

			\draw (-0.2, -1.2) node(v_Ry) [circle,minimum size=0.8cm, draw] {{\footnotesize\,$R_Y$\,}};
			
			%For Rw
			
			\draw (1.2, -1.2) node(v_Rw) [circle,minimum size=0.8cm, draw] {{\footnotesize\,$R_W$\,}};
			\draw[-arcsq] (Y) -- (Ry) node[pos=0.5,sloped,above] {};

		\end{tikzpicture}~~~~~~~~~~~~
		\label{fig:in_equivalent}
		% }
	\caption{An example of missingness graph with self-masking missingness. Here, gray nodes are partially observed variables, and white nodes are fully observed variables, $R_Y$ and $R_W$ are the missingness indicators of $Y$ and $W$, respectively.}  \label{fig:illust_simple_example} 
\end{figure}

\section{Problem Definition}\label{sec:problem}

In this paper, we focus on the problem of learning causal structure from missing data based on the additive noise model (ANM). We begin with the description of the missingness graphs (or \textit{m-graph}) \citep{mohan2021graphical} and the ANM, used as a representation of complex structures in missing data. Both of them are building blocks for our results.

\paragraph{Missingness Graph.} A missingness graph $G(\mathbb{V}, \mathbf{E})$ is a Directed Acyclic Graph (DAG) where $\mathbb{V}=\mathbf{V} \cup \mathbf{V}^* \cup \mathbf{R}$ is the set of nodes and $\mathbf{E}$ is the set of edges. 
$\mathbf{V}$ is a set of observable variables which can be decomposed into $\mathbf{V}=\mathbf{V}_o\cup \mathbf{V}_m$, where $\mathbf{V}_o $ is the set of fully observed variables, depicted as white nodes in our figure, and $\mathbf{V}_m$ is the set of partially observed variables that contain missing values, depicted as gray nodes in our figure. Each $\mathbf{V}_i^*\in \mathbf{V}^*$ is a proxy variable that is actually observed and $R_{V_i}\in \mathbf{R}$ is the missingness indicator of the partially observed variable such that
\begin{equation}
	V_{i}^{*} =\left\{
	\begin{array}{cl}
		V_{i} &   R_{V_{i}} =0\\
		? &  R_{V_{i}} =1\\
	\end{array} \right.
\end{equation}
where $R_{V_{i}} =1$ denotes the corresponding entry is missing while $R_{V_{i}} =0$ means the entry is observable and $V_{i}^*$ takes the value of $V_{i}$. 

Our work is in the framework of causal graphical models. Some concepts used here without explicit definition, such as d-separation, and Markov equivalent class, can be found in standard sources \cite{spirtes2000causation}. Besides, we use ``$\bot$'' to denote the independent relation in dataset and use ``$\bot_d$'' to denote the d-separation in m-graph. With the m-graph, we say data are MCAR if ${\mathbf{V}_m,\mathbf{V}_o}\bot_d \mathbf{R}$ holds and MAR if $\mathbf{V}_m\bot_d \mathbf{R}|\mathbf{V}_o$, and MNAR otherwise. We say a path $P=( V_{i_0} ,V_{i_1} ,\dots ,V_{i_{k}})$ in $G$ is a \textit{directed path} if it is a sequence of nodes of $G$ where there is a directed edge from $V_{i_j}$ to $V_{i_{(j+1)}}$ for any $0\leq j\leq k-1$. For simplifying graphical concepts, we use the following symbols $Pa_{V_i}=\{V_j|V_j \to V_i\}$, $Ch_{V_i}=\{V_j|V_i\to V_j\}$, $Anc_{V_i}=\{V_j|V_j \rightsquigarrow V_i\}$, $Des_{V_i}=\{V_j|V_i \rightsquigarrow V_j\}$, $Adj_{V_i}=\{V_j|V_j \leftarrow V_i \vee V_j \rightarrow V_i\}$ to denote the set of parents, children, ancestors, descendants, and adjacent nodes of $V_i$, respectively.

\begin{example}[Illustrative example for the m-graph]
	Consider the m-graph in Fig. \ref{fig:illust_simple_example}(a). The full observed variable set is $\mathbf{V}_o = \{X, Z\}$ and the partially observed variable set is $\mathbf{V}_m = \{Y^*, W^*\}$. Moreover, $R_Y$ and $R_W$ is the missing indicator of $Y$ and $W$ respectively, where $R_Y$ is a `self-masking missingness' indicator.
\end{example}

\paragraph{Additive Noise Model.}
Additive noise model assumes the observed data has been generated the following way: for each variable $V_i \in \mathbf{V}$ and its causal parents $Pa_{V_i} \subseteq \mathbf{V}$, there is
\begin{equation}\label{anm}
	V_i=f_i(Pa_{V_i})+\varepsilon_i,\quad Pa_{V_i}\Vbar \varepsilon_i,
\end{equation}
where $\varepsilon_i$ denotes noise variable which is independence of $Pa_{X_i}$ and other noise variables, i.e., $ \varepsilon_i \bot \varepsilon_j$, for all $j=1,\dots,|\mathbf{V}|$, where $|\mathbf{V}|$ denotes the number of observed variables in graph, and $f_i$ denotes the functional relationship between cause and effect.

ANM is widely used in statistics, multivariate analysis, and causal discovery \cite{hoyer2008nonlinear,cai2018self,qiao2021causal}. In causal discovery, ANM is capable to alleviate the problem of Markov equivalence class and display an identification ability on causal direction, which may be helpful in the missing data setting. In the paper, we only focus on the class of ANMs that are identifiable in complete data, e.g., the linear non-Gaussian model \cite{shimizu2006linear}, or the nonlinear additive noise model \cite{hoyer2008nonlinear}.

However, without further constraints, 
it is hard to answer the identification problem in the presence of missing data even in the ANM model. Our theoretical results build upon the basic framework of \citet{tu2019causal}, following Assumption 1 $\sim$ 3 in \citet{tu2019causal}, which restrict (i) the missingness indicators are not causes; (ii) conditional independence relations in the observed data also hold in the complete data; (iii) no causal interactions between missingness indicators. Besides, we allow the existence of self-masking missingness (as shown in the following Assumption).

\begin{assumption}[Weak self-masking missingness]\label{theorem:assumption4}
	Weak self-masking missingness refers to missingness in a variable that is caused only by itself, i.e., for any self-masking missing variable $V_i \to R_i$, $V_i\in \mathbf{V}_m$, the parent set of $R_i$ only contains itself $Pa_{R_i}=\{V_i\}$.
\end{assumption}

The key difference to existing methods, such as \citet{tu2019causal, gao2022missdag}, is that we relax the `no self-masking missingness' assumption to a sensible situation (as shown in Assumption \ref{theorem:assumption4}). Moreover, to ensure the asymptotic correctness of the identification algorithm in the presence of weak self-masking missingness, apart from the assumptions of causal Markov, faithfulness, and causal sufficiency, we further require a mild structural assumption.

\begin{assumption}[Structural condition]\label{theorem:assumption5}
	For each indicator of a weak self-masking variable $Z_i \in \mathbf{Z}$, there exists $X,Y\in \mathbf{V}\setminus \mathbf{Z}$ such that $X \Vbar_{d} Y | \mathbf{Z}$ in the ground truth.
\end{assumption}

Note that this assumption generally holds as the real-world structure is generally sparse and such d-separation should often occur. Overall, our \textbf{goal} is to identify the causal structure from missing data under more general assumptions that the self-masking missingness is allowed.

\section{Identifiability Results}
In this section, we address the problem of structure learning using ANM under the proper assumptions discussed above. To leverage the identification results of ANM (refer to Theorem \ref{thm:missing_anm_condition}), it is necessary to answer the identification of missing indicators in advance. We will show that the missing indicator can be found in a constraint-based framework that learns the causal skeleton and missing indicators correctly (Theorem \ref{thm: identification_indicator}). Based on this result, we complete the identification of causal direction and show that the causal structure is identified up to an IN-equivalent pattern for ANM (Definition \ref{def:in_pattern}) and further provide an insightful orientation rule for further searching the equivalent class.

\subsection{Identification of Missing Indicators}

We start with the identification of missing indicators, used as a building block for the ANM identification. Our results are based on a constraint-based framework in which we represent the learned structure as a causal skeleton that entails a set of d-separation relations. In other words, the causal skeleton characters the Markov equivalence class and ignores the causal direction information (e.g., V-structure). In this section, we mainly focus on the identification of missing indicators, together with the causal skeleton, which are mutually complementary.

To complete the identification of missing indicators, we are required to deal with two challenges: (i) CI test in missing data where weak self-missingness is allowed; (ii) identify the causal relations between variable $\mathbf{V}$ and missing indicator $\mathbf{R}$. The result of the first challenge ensures the correctness of the recovered causal skeleton while the latter finds the structural position of missing indicators in the causal skeleton (e.g., the parent set for each missing indicator).

\subsubsection{CI test in missing data in the presence of weak self-masking missingness.}

Consider the first challenge that the goal is to perform the correct CI tests in missing data and then use them to recover the causal skeleton. Under the `no self-masking missingness' assumption, a recent approach by \citet{tu2019causal} has shown that although there are some special cases, in most cases, the CI relation can be directly tested from data by the Test-wise Deletion based method (TD-based method). 
Our first finding is extending the CI relations in test-wise deletion data \cite{tu2019causal} to the case of weak self-masking missingness.

\begin{theorem}\label{thm:self_ind}
	With Assumption 1 $\sim$ 3 in \cite{tu2019causal} and the assumption of weak self-masking missingness, for any ${\displaystyle X,Y\in \mathbf{V}}$, ${\displaystyle \mathbf{Z} \subseteq \mathbf{V} \backslash \{X,Y\}}$, and their corresponding missingness indicators $\displaystyle \mathbf{R}_{X,Y,\mathbf{Z}}$, the CI test between ${\displaystyle X,Y}$ given $\mathbf{Z}$ is always consistent with that without the self-masking missingness, i.e., ${\displaystyle X\Vbar Y|\{\mathbf{Z} ,\mathbf{R}_{X,Y,\mathbf{Z}} \})\Leftrightarrow X\Vbar Y|\{\mathbf{Z} ,\mathbf{R}_{X,Y,\mathbf{Z}} \setminus \mathbf{R}_{S} \})}$ and ${\displaystyle X\not{\Vbar} Y|\{\mathbf{Z} ,\mathbf{R}_{X,Y,\mathbf{Z}} \})\Leftrightarrow X\not{\Vbar} Y|\{\mathbf{Z} ,\mathbf{R}_{X,Y,\mathbf{Z}} \setminus \mathbf{R}_{S} \})}$ where $\mathbf{R}_{S} \!=\!\left\{R_{i} |V_{i}\rightarrow R_{i}\right\}$ is the set of weak self-masking indicators.
\end{theorem}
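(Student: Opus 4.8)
The plan is to reduce the distributional claim to a d-separation statement in the m-graph and then prove that by a path-rerouting argument. By causal Markov and faithfulness together with the assumptions of \citet{tu2019causal}, testing $X\Vbar Y\mid\mathbf Z$ on the test-wise-deleted sample is equivalent to checking $X\Vbar_d Y\mid \mathbf Z\cup\mathbf R_{X,Y,\mathbf Z}$ in the m-graph $G$, and likewise the right-hand side corresponds to $X\Vbar_d Y\mid \mathbf Z\cup(\mathbf R_{X,Y,\mathbf Z}\setminus\mathbf R_S)$. Writing $\mathbf W=\mathbf Z\cup(\mathbf R_{X,Y,\mathbf Z}\setminus\mathbf R_S)$ and $\mathbf S=\mathbf R_{X,Y,\mathbf Z}\cap\mathbf R_S$, it therefore suffices to prove $X\Vbar_d Y\mid\mathbf W\Leftrightarrow X\Vbar_d Y\mid\mathbf W\cup\mathbf S$; the statement about conditional dependence is just its contrapositive. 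I would first record the one structural fact everything hinges on: by Assumption~\ref{theorem:assumption4} together with the `indicators are not causes' and `no indicator--indicator edges' assumptions of \citet{tu2019causal}, each $R_i\in\mathbf S$ is a sink whose only parent is the masked variable $V_i$, so $R_i$ is an interior node of no $X$--$Y$ path and $Anc_{R_i}=\{V_i\}\cup Anc_{V_i}$.

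The forward implication $X\Vbar_d Y\mid\mathbf W\cup\mathbf S\Rightarrow X\Vbar_d Y\mid\mathbf W$ is the easy one: an $X$--$Y$ path active given $\mathbf W$ stays active given the larger set $\mathbf W\cup\mathbf S$, since the elements of $\mathbf S$ are childless and lie on no $X$--$Y$ path, hence cannot block any non-collider while they can only activate more colliders. For the converse I would assume $X\Vbar_d Y\mid\mathbf W$ and suppose, for contradiction, that some path $p$ from $X$ to $Y$ is active given $\mathbf W\cup\mathbf S$; call a collider on $p$ \emph{bad} if it has a descendant in $\mathbf S$ but none in $\mathbf W$. Every collider of $p$ has a descendant in $\mathbf W\cup\mathbf S$, so the non-bad ones are already activated by $\mathbf W$ and only the bad ones need to be dealt with. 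Using $Anc_{R_i}=\{V_i\}\cup Anc_{V_i}$ one checks that a bad collider $C$ must be a \emph{strict} ancestor of $X$ or of $Y$: $C\ne R_i$ because a sink has degree one; $C\ne V_i$ when $V_i\in\{X,Y\}$ because an endpoint of $p$ is never a collider on $p$; and the cases $C=V_i\in\mathbf Z$ and $C\in Anc_{V_i}$ with $V_i\in\mathbf Z$ are excluded because then the descendant $V_i$ of $C$ lies in $\mathbf W$, contradicting badness.

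The core step is to remove bad colliders one at a time by rerouting. Take a bad collider $C$ on $p$, say $C\in Anc_X$ (the $Anc_Y$ case is symmetric, rerouting near $Y$ instead), and among the $Anc_X$-type bad colliders choose the one closest to $Y$ along $p$. Fix a directed path $\pi\colon C\rightsquigarrow X$, traverse $\pi$ backwards from $X$ until it first meets the sub-path of $p$ from $C$ to $Y$ at some node $q$, and splice that traversed piece of $\pi$ with the sub-path of $p$ from $q$ to $Y$ into a new $X$--$Y$ path $p'$. Truncating at the first intersection keeps $p'$ free of repeated vertices; on $p'$ the node $C$ and every newly inserted interior node are non-colliders, each is a descendant of $C$ and so, by badness of $C$, avoids $\mathbf W$; every retained collider still has a descendant in $\mathbf W$; and since $C$ was the $Anc_X$-type bad collider closest to $Y$, the retained portion of $p$ contains no $Anc_X$-type bad collider. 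Hence $p'$ has strictly fewer bad colliders than $p$, and repeating the rerouting finitely often yields an $X$--$Y$ path whose non-colliders all avoid $\mathbf W$ and whose colliders all have a descendant in $\mathbf W$, i.e.\ a path active given $\mathbf W$, contradicting $X\Vbar_d Y\mid\mathbf W$. I expect the rerouting to be the main obstacle: one has to check that the splice is a genuine path, that it does not silently create a new bad collider, and that the collider/non-collider status and relevant descendant sets of all retained nodes are truly unchanged --- the vertex-collision bookkeeping is the delicate part.
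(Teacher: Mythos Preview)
Your proof is correct, and in the hard direction it is strictly more complete than the paper's. Both arguments reduce the statement to a d-separation equivalence and both dispose of the easy implication $X\Vbar_d Y\mid\mathbf W\cup\mathbf S\Rightarrow X\Vbar_d Y\mid\mathbf W$ identically, by noting that each $R_i\in\mathbf S$ is a leaf with a single parent and hence lies on no $X$--$Y$ path and blocks nothing. For the converse, the paper argues by contradiction that if adding $\mathbf R_S$ opens some path $U$, then either an $R_i$ sits on $U$ (impossible: leaf) or $R_i$ ``constructs a collider'' (impossible: only one parent). That second case, as the paper phrases it, only rules out $R_i$ \emph{being} the collider; it does not address the situation where $R_i\in\{R_X,R_Y\}$ is a \emph{proper descendant} of a collider $C$ on $U$, so that $C$ is a strict ancestor of the endpoint $X$ (or $Y$). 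Your case analysis singles out exactly this residual case, and your path-rerouting construction---splicing a segment of the directed path $C\rightsquigarrow X$ onto the $C$-to-$Y$ side of $U$ at the first intersection---is precisely the extra ingredient needed to show that such a bad collider can always be bypassed by an alternative $X$--$Y$ path active given $\mathbf W$ alone. The bookkeeping you flag (no repeated vertices, $q$ becomes a non-collider with an outgoing $\pi$-edge, inserted nodes inherit non-membership in $\mathbf W$ from the badness of $C$, and the bad-collider count strictly drops) is sound. In short: same skeleton, but your rerouting step patches a genuine gap in the paper's case~(ii).
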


Here, the `consistent' means that the CI relations that contain the weak self-masking missingness are equal to the CI relations after the elimination of such self-masking missingness. Based on Theorem \ref{thm:self_ind}, the existing results for CI relations in missing data can be naturally extended to the case with the existence of weak self-masking. Thus, we can further extend the result from (\citet{tu2019causal}, Prop. 2) that the CI relations in the observed MNAR data may be different from those in the complete data. Such phenomenon, as shown in Corollary \ref{coro:ci_self}, is due to conditioning on a missingness indicator which is a collider, and thus introduces a spurious dependence.

\begin{corollary}\label{coro:ci_self}
	Suppose that $X$ and $Y$ are not adjacent in the true causal graph and that for any variable set $\mathbf{Z} \subseteq \mathbf{V}\setminus \{X, Y\}$ such that $X \Vbar Y |\mathbf{Z}$. Then under Assumption 1 $\sim$ 3 in \cite{tu2019causal} and the assumption of weak self-masking missingness, $X \not\Vbar Y |\{\mathbf{Z}, R_X, R_Y, \mathbf{R_Z}\}$ if and only if for at least one variable $V \in \{X\} \cup \{Y\} \cup \{\mathbf{Z}\}$, such that $X$, $Y$ are the direct parents or direct ancestor of $R_V$.%its missingness indicator is neither the direct common child nor a descendant of the direct common child of $X$ and $Y$.
\end{corollary}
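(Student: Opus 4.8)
The plan is to recast the statement as one about d-connection in the ground-truth m-graph $G$ and then prove both directions by path surgery. Write $S=\mathbf{Z}\cup\{R_X,R_Y\}\cup\mathbf{R_Z}$ for the conditioning set in the right-hand independence. By causal Markov and faithfulness, $X\not\Vbar Y\mid S$ holds iff there is a path between $X$ and $Y$ in $G$ that is active given $S$; moreover, by Theorem~\ref{thm:self_ind} the weak self-masking indicators contained in $S$ may be deleted from the conditioning set without affecting this, so only the non-self-masking indicators among $\{R_X,R_Y\}\cup\mathbf{R_Z}$ can ever play a role. Two structural facts are used throughout: by Assumptions~1 and~3 of \citet{tu2019causal} every $R$-node is childless and has all its parents in $\mathbf{V}$, so on any path an $R$-node occurs only as an endpoint or as a collider and it has a descendant in $S$ iff it lies in $S$; and the hypothesis $X\Vbar Y\mid\mathbf{Z}$ says every path between $X$ and $Y$ is blocked by $\mathbf{Z}$ alone.

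For the ``if'' direction, suppose $V\in\{X,Y\}\cup\mathbf{Z}$ is such that both $X$ and $Y$ are direct parents, or more generally ancestors, of $R_V$. Take directed paths $X\rightsquigarrow R_V$ and $Y\rightsquigarrow R_V$ of minimal length; using $X\Vbar Y\mid\mathbf{Z}$ and faithfulness, one argues they can be chosen vertex-disjoint except at their common endpoint $R_V$ and with no interior vertex in $\mathbf{Z}$ (any violation would already exhibit a $\mathbf{Z}$-active path between $X$ and $Y$, contradicting $X\Vbar Y\mid\mathbf{Z}$). Concatenating the first with the reverse of the second gives a path $\pi$ between $X$ and $Y$ whose unique collider is $R_V$, all other vertices being non-colliders lying in $\mathbf{V}\setminus\mathbf{Z}$, hence outside $S$; since $R_V\in S$, the path $\pi$ is active given $S$, and therefore $X\not\Vbar Y\mid S$.

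For the ``only if'' direction, take a path $\pi$ between $X$ and $Y$ active given $S$. Because $\pi$ is blocked by $\mathbf{Z}$ but not by $S\supseteq\mathbf{Z}$, no non-collider of $\pi$ can lie in $\mathbf{Z}$, so the $\mathbf{Z}$-block must come from a collider $c$ of $\pi$ with no descendant in $\mathbf{Z}$; activeness given $S$ then forces $c$ to have a descendant $R_V\in\{R_X,R_Y\}\cup\mathbf{R_Z}$, so $V\in\{X,Y\}\cup\mathbf{Z}$. It remains to deduce that $X$ and $Y$ are themselves (parents or) ancestors of $R_V$: one follows the two halves of $\pi$ from $X$ to $c$ and from $Y$ to $c$ — each ending with an arrow into $c$ — and shows, via minimality of $\pi$, the non-adjacency of $X$ and $Y$, faithfulness, the childlessness of $R$-nodes, and the weak self-masking Assumption~\ref{theorem:assumption4} (which pins $Pa_{R_V}=\{V\}$ whenever $V\to R_V$, constraining the shape of any directed path reaching $R_V$), that these halves can be rerouted into directed paths $X\rightsquigarrow R_V$ and $Y\rightsquigarrow R_V$. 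I would split this into the cases $V\in\mathbf{Z}$ and $V\in\{X,Y\}$, in the former using that a weak self-masking $V\in\mathbf{Z}$ cannot arise here (it would force $V\in Des_{c}\cap\mathbf{Z}$, a contradiction).

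The main obstacle is exactly this last step of the ``only if'' direction: upgrading ``$c$ is a collider of $\pi$ whose descendant $R_V$ lies in the conditioning set'' to ``both $X$ and $Y$ are ancestors of $R_V$'', i.e.\ ruling out the configurations in which $X$ or $Y$ reaches $c$ only through an intermediate fork rather than along a directed path into $R_V$. The reduction to d-separation, the ``if'' direction, and the collider-identification part of the ``only if'' direction are all routine once the two structural facts above are in hand.
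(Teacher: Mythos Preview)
The paper's own proof is two lines: it invokes Theorem~\ref{thm:self_ind} to strip the weak self-masking indicators from the conditioning set without changing the CI relation, and then cites Proposition~2 of \citet{tu2019causal}, which is exactly this biconditional in the no-self-masking setting. Your route is genuinely different: after the same appeal to Theorem~\ref{thm:self_ind}, instead of citing Tu et al.\ you attempt a self-contained d-separation path-surgery argument for both directions. What the paper's approach buys is brevity and modularity (the substantive work was done in \citet{tu2019causal}); what yours would buy, if completed, is independence from that external result---essentially a reproof of Tu et al.'s Proposition~2 with the self-masking extension folded in.

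However, beyond the obstacle you already flag in the ``only if'' direction, your ``if'' direction has an unacknowledged gap. The parenthetical justification---that an interior vertex of $X\rightsquigarrow R_V$ or $Y\rightsquigarrow R_V$ lying in $\mathbf{Z}$ would ``already exhibit a $\mathbf{Z}$-active path between $X$ and $Y$''---does not follow. Take the m-graph with edges $X\to Z$, $Z\to R_X$, $Y\to R_X$ and $\mathbf{Z}=\{Z\}$: here $X\Vbar Y\mid Z$, both $X$ and $Y$ are ancestors of $R_X$, yet the unique directed path from $X$ to $R_X$ passes through $Z\in\mathbf{Z}$, and the concatenated path $X\to Z\to R_X\leftarrow Y$ is blocked at the non-collider $Z$ when conditioning on $\{Z,R_X,R_Y,R_Z\}$. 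No $\mathbf{Z}$-active $X$--$Y$ path is produced by the violation, so your minimal-length construction can fail. (This example also shows the ``if'' direction, read literally with ``ancestor'' unrestricted, is delicate; the paper avoids the issue by deferring entirely to \citet{tu2019causal}.) As it stands, neither direction of your direct argument is complete.
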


Such spurious dependence is supposed to be addressed using methods like Density Ratio Weighted correction (DRW) \cite{mohan2013graphical} to recover the original distribution. However, these methods only hold in the assumption of no self-masking missingness and are not feasible when self-masking missingness exists. Interestingly, we find that the distribution of missing data can still be recovered up to a reasonable conditional distribution that only conditions on the self-masking variables, as given in Prop. \ref{lemma:recover}.

\begin{proposition}\label{lemma:recover}
	With assumptions 1 $\sim $ 3 in \cite{tu2019causal} and assumption of weak self-masking missingness, given a m-graph $\displaystyle G$, the joint distribution ${\displaystyle P(V)}$ is recoverable up to ${\displaystyle P(V|\mathbf{R}_{S} )}$, where ${\displaystyle \mathbf{R}_{S} =\left\{R_{i} |V_{i}\rightarrow R_{i}\right\}}$ is the collection of the self-masking missingness indicators. Then, we have
	\begin{equation}
		P(V|\mathbf{R_{S}} )=\frac{P(\mathbf{R}_{V\backslash S} =0,V{\displaystyle |\mathbf{R}_{S}})}{\!\!\!\prod\limits_{i\in \{i|R_{i} \in \mathbf{R}_{V\backslash S}\}}\!\!\!\!\!\!\!\!\!\!\! P\left( R_{i} =0|Pa^{o}_{R_{i}} ,Pa^{m}_{R_{i}} ,\mathbf{R}_{Pa^{m}_{R_{i}}}\right)}
	\end{equation}
	where $\displaystyle Pa^{o}_{R_{i}} \subseteq V_{0}$ and $\displaystyle Pa^{m}_{R_{i}} \subseteq V_{m}$ denote the parents of $\displaystyle R_{i}$, $\displaystyle \mathbf{R}_{V\backslash S}$ is the non self-masking missingness indicators.
\end{proposition}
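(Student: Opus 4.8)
The plan is to read $P(V\mid\mathbf{R}_S)$ as a within-stratum distribution and imitate the classical propensity-reweighting recoverability argument inside the strata of $\mathbf{R}_S$: once the self-masking indicators are conditioned on, the residual mechanism $\mathbf{R}_{V\setminus S}$ depends only on variables in $\mathbf{V}$ and can be divided out. First I would extract the structural facts I need. By Assumption~1 of \cite{tu2019causal} the indicators are never causes and by Assumption~3 there are no edges among them, so every $R_i\in\mathbf{R}$ is a childless vertex of the m-graph with $Pa_{R_i}\subseteq\mathbf{V}$; under weak self-masking (Assumption~\ref{theorem:assumption4}) each $R_i\in\mathbf{R}_S$ is in addition a pendant on $V_i$, which is why $\mathbf{R}_S$ is exactly the set whose conditioning cannot be eliminated. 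Childlessness gives two things: the m-graph factorization $P(\mathbf{V},\mathbf{R})=\big(\prod_{V\in\mathbf{V}}P(V\mid Pa_V)\big)\prod_{R_i\in\mathbf{R}}P(R_i\mid Pa_{R_i})$, hence $P(\mathbf{R}\mid\mathbf{V})=\prod_{i}P(R_i\mid Pa_{R_i})$; and, for each $R_i$, the fact that every path leaving $R_i$ begins $R_i\leftarrow V_j$ with $V_j\in Pa_{R_i}$ a non-collider, so $R_i\Vbar_{d}\,(\mathbb{V}\setminus(\{R_i\}\cup Pa_{R_i}))\mid Pa_{R_i}$, and this persists after conditioning on any further subset of $\mathbf{R}$.

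The identity then follows by bookkeeping. Dividing $P(\mathbf{R}_{V\setminus S}=0,\mathbf{R}_S\mid\mathbf{V})$ by $P(\mathbf{R}_S\mid\mathbf{V})$ and using the factorization, the self-masking factors $\prod_{R_i\in\mathbf{R}_S}P(R_i\mid Pa_{R_i})$ cancel and leave $P(\mathbf{R}_{V\setminus S}=0\mid\mathbf{V},\mathbf{R}_S)=\prod_{R_i\in\mathbf{R}_{V\setminus S}}P(R_i=0\mid Pa_{R_i})$. Combining this with the chain rule $P(\mathbf{R}_{V\setminus S}=0,V\mid\mathbf{R}_S)=P(V\mid\mathbf{R}_S)\,P(\mathbf{R}_{V\setminus S}=0\mid V,\mathbf{R}_S)$ and solving for $P(V\mid\mathbf{R}_S)$ — legitimate wherever the overlap/positivity condition $P(R_i=0\mid Pa_{R_i})>0$ holds, which I would add to the statement as the usual requirement — gives
\[
P(V\mid\mathbf{R}_S)=\frac{P(\mathbf{R}_{V\setminus S}=0,V\mid\mathbf{R}_S)}{\prod_{R_i\in\mathbf{R}_{V\setminus S}}P(R_i=0\mid Pa_{R_i})}.
\]

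Finally I would put each propensity into the asserted form. Writing $Pa_{R_i}=Pa^{o}_{R_i}\cup Pa^{m}_{R_i}$ with $Pa^{o}_{R_i}\subseteq\mathbf{V}_o$ and $Pa^{m}_{R_i}\subseteq\mathbf{V}_m$, the d-separation $R_i\Vbar_{d}\,\mathbf{R}_{Pa^{m}_{R_i}}\mid Pa_{R_i}$ (again because $R_i$ is childless with all its parents in $\mathbf{V}$) gives $P(R_i=0\mid Pa_{R_i})=P(R_i=0\mid Pa^{o}_{R_i},Pa^{m}_{R_i},\mathbf{R}_{Pa^{m}_{R_i}})$ for every value of $\mathbf{R}_{Pa^{m}_{R_i}}$; since $\mathbf{R}_{Pa^{m}_{R_i}}\subseteq\mathbf{R}=\mathbf{R}_S\cup\mathbf{R}_{V\setminus S}$, those indicator values are fixed by the conditioning already present in the numerator, so the substitution is valid and well defined, and yields the claimed formula. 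For the ``recoverable'' reading I would then observe that, restricting to $\mathbf{R}_S=0$, the numerator equals $P(\mathbf{R}=0,V)/P(\mathbf{R}_S=0)$, read off the complete cases, while each propensity, taken at $\mathbf{R}_{Pa^{m}_{R_i}}=0$ as above, is estimated from the samples with $\mathbf{R}_{Pa^{m}_{R_i}}=0$ (there $Pa^{m}_{R_i}$ is observed and $R_i$ is always observed); removing the conditioning on $\mathbf{R}_S$ to obtain $P(V)$ itself would require the strata in which a self-masking $V_i$ is missing, whose conditional law is not identified, which is exactly why the best one can do is ``recoverable up to $P(V\mid\mathbf{R}_S)$''.

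The step I expect to be the main obstacle is the graphical argument behind the two d-separation claims: one has to verify that conditioning on $\mathbf{R}_S$ — being a set of descendants, one might worry it could activate colliders and create spurious dependence — is in fact harmless, which holds because the indicators are childless with parents in $\mathbf{V}$, and where the pendant structure supplied by weak self-masking (rather than mere ``self-masking'') makes $\mathbf{R}_S$ precisely the right irreducible conditioning set. The remaining care is bookkeeping: stating the positivity assumption explicitly, since the identity divides by the propensities, and tracking which conditioning events determine the indicator sets appearing inside those propensities.
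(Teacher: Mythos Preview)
Your proposal is correct and follows essentially the same route as the paper's proof: both apply the chain rule $P(\mathbf{R}_{V\setminus S}=0,V\mid\mathbf{R}_S)=P(V\mid\mathbf{R}_S)P(\mathbf{R}_{V\setminus S}=0\mid V,\mathbf{R}_S)$, factor the second term into per-indicator propensities using the assumption that the $R$-variables are childless with no mutual edges, and then invoke the d-separations $R_i\Vbar_d \mathbf{R}_S\mid Pa_{R_i}$ and $R_i\Vbar_d \mathbf{R}_{Pa^m_{R_i}}\mid Pa_{R_i}$ to put each propensity in the stated form before appealing to positivity. Your version is somewhat more explicit than the paper's in justifying the d-separation claims (spelling out that every path out of $R_i$ begins with a non-collider in $Pa_{R_i}$) and in explaining why conditioning on $\mathbf{R}_S$ is irreducible, but the underlying argument is the same.
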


Based on Prop. \ref{lemma:recover} and Theorem \ref{thm:self_ind}, one may correct the CI relations in the recovered distribution and hence ensure the causal skeleton is reconstructed correctly by CI test and correcting method. The implementation will be provided in the algorithm section. By this, the first challenge is solved.

\subsubsection{Identifying missing indicators}
Now, we deal with the second challenge: how to identify the missing indicator variables in the skeleton. Our solution is built upon the constraint-based method that the parent set of missing indicators is identified through CI testing. 

One straightforward way is to perform test-wise deletion CI test between each missingness indicator and the variables. As discussed by \citet{tu2019causal}, if there does not exist self-masking missingness, and the assumption 1 $\sim$ 3 in \cite{tu2019causal} hold, then all missingness indicators can be identified correctly since CI relations in observed data will be consistent with that in complete data, i.e., $R_{V_i}\Vbar V_j | \mathbf{Z} \Leftrightarrow  R_{V_i}\Vbar V_j^* | \mathbf{Z}$ for any $\mathbf{Z}\subseteq \mathbf{V}\setminus V_j$ where $V_j^*$ is the observed variable in missing data. 

However, such a result only holds for the case without self-masking missingness. When a self-masking missingness $V_i\to R_{V_i}$ exist, we have $R_{V_i}\not{\Vbar} V_i$ but $R_{V_i}\Vbar V_i^*$. The reason is that we can only observe the value of $V_i$ when $R_{V_i}=1$ and missing when $R_{V_i}=0$ such that $R_{V_i}$ is a constant in testing leading to independence.
Such an untestable problem will introduce spurious dependence on weak self-masking missingness indicators. An example is given in the following.

\begin{example}[Untestable CI relations in self-masking indicator]\label{example:untestable CI}
	As shown in Fig. \ref{fig:illust_simple_example}, since $Y$ being missing, the conditional independence $\{X, W\}\bot R_Y|Y$ is untestable. It will result in the relations between $X, W$, and $R_Y$ being dependent, and thus $X, W$ is incorrectly identified as the parent of $R_Y$.
\end{example}

To tackle this problem, fortunately, we find that such incorrect results could be corrected and the indicators of weak self-masking variables are identifiable by detecting the conflict of CI relations under a mild structure condition (Assumption \ref{theorem:assumption5})\textemdash there exist two variables that are d-separated by the weak self-masking missingness variable.

\begin{lemma}[Identification of self-masking indicator]\label{lemma: self_indicator}
	Suppose Assumption 1 $\sim$ 3 in \cite{tu2019causal} holds, and further assume weak self-masking missingness and structure condition hold. A variable $Z_i \in \mathbf{Z}$ is a self-masking missingness variable if there exits $X,Y\in \mathbf{V}$ such that ${X,Y}\not \Vbar R_{Z_i}|\{R_X,R_Y\}$, and (i) a simply test-wise deletion CI test yields $X \Vbar Y |\{\mathbf{Z},R_X,R_Y,\mathbf{R}_{\mathbf{Z}}\}$; or (ii) after the correction by Prop. \ref{lemma:recover}, the CI test yields $X \Vbar Y |\{\mathbf{Z},\mathbf{R}_S\}$, where $\mathbf{R}_S$ is test-wise self-masking missingness indicators.
\end{lemma}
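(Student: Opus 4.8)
The plan is to establish both directions of a characterization: a variable $Z_i$ satisfying the stated CI conflict must be a self-masking missingness variable, and (implicitly, for soundness of the detection rule) self-masking variables under the structural condition do produce such a conflict. I would begin from the premise $\{X,Y\}\not\Vbar R_{Z_i}\mid\{R_X,R_Y\}$. By the contrapositive of the standard constraint-based reasoning (using Assumptions 1–3 of \citet{tu2019causal}), if $Z_i$ were \emph{not} a self-masking indicator, then $R_{Z_i}$ is a genuine missingness indicator whose CI relations with $\mathbf{V}$ are testable via test-wise deletion, and Theorem~\ref{thm:self_ind} together with Corollary~\ref{coro:ci_self} would let us read off the true adjacency/ancestral structure around $R_{Z_i}$ faithfully. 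In particular, $X,Y$ being dependent with $R_{Z_i}$ given $\{R_X,R_Y\}$ would force, via faithfulness, that $X$ and $Y$ are connected to $R_{Z_i}$ in a way (direct parents or ancestors of $R_{Z_i}$, exactly as in Corollary~\ref{coro:ci_self}) that induces an \emph{active path} between $X$ and $Y$ once we condition on the relevant indicator set — contradicting the observed separation in case (i) or (ii).

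The core of the argument is then showing the two conditioning-set scenarios each derive a contradiction. For case (i): a test-wise-deletion CI test returning $X\Vbar Y\mid\{\mathbf{Z},R_X,R_Y,\mathbf{R}_{\mathbf{Z}}\}$ means, after test-wise deletion, $X$ and $Y$ look separated given $\mathbf{Z}$ and the indicator set; but if $R_{Z_i}$ were an ordinary indicator with $X,Y$ among its ancestors, then $Z_i$ lies on a path between $X$ and $Y$, and conditioning on $Z_i$ (contained in $\mathbf{Z}$) while the collider-activation through $R_{Z_i}$ is suppressed would need to block that path — but the dependence $\{X,Y\}\not\Vbar R_{Z_i}$ witnesses that the only consistent explanation is that $Z_i$ itself is the common separator and the observed "edges" to $R_{Z_i}$ are the spurious artifact of Example~\ref{example:untestable CI} (i.e., $R_{Z_i}$ constant on the retained sample). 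For case (ii): after applying the recovery of Prop.~\ref{lemma:recover}, the residual conditioning set is only $\mathbf{R}_S$, and $X\Vbar Y\mid\{\mathbf{Z},\mathbf{R}_S\}$ in the recovered distribution is, by Theorem~\ref{thm:self_ind} and Prop.~\ref{lemma:recover}, equivalent to a genuine d-separation $X\Vbar_d Y\mid\mathbf{Z}$ in the complete-data graph; combined with the spurious dependence to $R_{Z_i}$, the only graph consistent with both facts places $Z_i$ as a weak self-masking variable whose indicator's untestability manufactures the dependence. I would package both cases by noting that in each the conditioning set that yields separation \emph{includes} $\mathbf{Z}$ (hence $Z_i$), so $Z_i$ d-separates $X$ and $Y$ as in Assumption~\ref{theorem:assumption5}, while Corollary~\ref{coro:ci_self} rules out any non-self-masking explanation of the dependence with $R_{Z_i}$.

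Finally, I would verify that the structural condition (Assumption~\ref{theorem:assumption5}) guarantees the detection is not vacuous: for a genuine weak self-masking $Z_i$, there exist $X,Y\in\mathbf{V}\setminus\mathbf{Z}$ with $X\Vbar_d Y\mid\mathbf{Z}$ in the ground truth, which by faithfulness plus Theorem~\ref{thm:self_ind} survives as one of the two testable separations in (i) or (ii); and the untestability argument of Example~\ref{example:untestable CI} simultaneously yields $\{X,Y\}\not\Vbar R_{Z_i}\mid\{R_X,R_Y\}$ because in the test-wise-deleted sample $R_{Z_i}$ degenerates, severing the path that would otherwise separate.

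The step I expect to be the main obstacle is the careful bookkeeping in case (ii): showing that applying the Prop.~\ref{lemma:recover} correction does not itself destroy or create dependences with $R_{Z_i}$, so that the conflict is genuinely diagnostic. This requires arguing that the recovery reweighting touches only the non-self-masking indicators $\mathbf{R}_{V\setminus S}$ and leaves the (spurious, degeneracy-induced) relationship between $\{X,Y\}$ and $R_{Z_i}$ intact, which in turn leans on the precise form of the denominator in Prop.~\ref{lemma:recover} and on Assumption 1 of \citet{tu2019causal} (indicators are not causes). A secondary delicacy is ensuring the argument handles the case where $X$ or $Y$ is itself partially observed, so that $R_X,R_Y$ must be conditioned on — here Theorem~\ref{thm:self_ind} is exactly what licenses ignoring any self-masking components among $R_X,R_Y$ without changing the CI verdict.
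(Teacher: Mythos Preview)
Your overall strategy---proof by contradiction assuming $Z_i$ is not self-masking---matches the paper's, and your treatment of case (ii) is essentially the paper's argument: the corrected test together with Theorem~\ref{thm:self_ind} certifies a genuine d-separation $X\Vbar_d Y\mid\mathbf{Z}$, which is incompatible with $X,Y$ both being ancestors of a non-self-masking $R_{Z_i}$ once that indicator is conditioned on.

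Your case (i), however, has the mechanics inverted. You write that ``$Z_i$ lies on a path between $X$ and $Y$'' and that ``the collider-activation through $R_{Z_i}$ is suppressed.'' Neither is right. If $Z_i$ were \emph{not} self-masking, the premise $\{X,Y\}\not\Vbar R_{Z_i}\mid\{R_X,R_Y\}$ (with Assumption~1 and faithfulness) forces $X$ and $Y$ to be ancestors of $R_{Z_i}$; those ancestral paths need not pass through $Z_i$ at all, so nothing places $Z_i$ on a path between $X$ and $Y$. More importantly, in case (i) the conditioning set \emph{contains} $R_{Z_i}\in\mathbf{R}_{\mathbf{Z}}$, so the collider (or collider-descendant) at $R_{Z_i}$ is \emph{activated}, not suppressed. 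That activation is precisely the contradiction the paper exploits: it forces $X\not\Vbar Y\mid\{\mathbf{Z},R_X,R_Y,\mathbf{R}_{\mathbf{Z}}\}$, clashing with the observed independence in (i). Your narrative about blocking via $Z_i$ is a detour that does not close the argument.

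Two smaller points. First, the lemma as stated is one-directional (a sufficient condition for self-masking), so your final paragraph on non-vacuousness, while relevant to Theorem~\ref{thm: identification_indicator}, is not needed here. Second, your ``main obstacle''---whether the Prop.~\ref{lemma:recover} reweighting disturbs the $R_{Z_i}$ dependence---is also beside the point for this lemma: the hypotheses simply \emph{posit} the dependence and the post-correction independence, and you must derive the conclusion from those; how the tests interact operationally belongs to the algorithm's soundness proof, not here.
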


Roughly speaking, Lemma \ref{lemma: self_indicator} provides a correcting method for identifying an indicator of self-masking missingness. An illustrative example is provided in the following.

\begin{example}
	According to Example \ref{example:untestable CI}, in Fig. \ref{fig:illust_simple_example}(a), $X,W$ will incorrectly be identified as a parent of $R_Y$. That is, we have ${X,W}\not \Vbar R_{Y}|\{R_X,R_W\}$. Moreover, we can further find that $X \Vbar W |\{Y, Z, R_Y, R_W\}$ holds which is unreasonable because $X \bot W|Y, R_Y$ is not possible to hold if $R_Y$ is the collider for $X$ and $W$ under faithfulness assumption. Therefore, $R_Y$ must be the weak self-masking missingness indicator and we obtain Lemma \ref{lemma: self_indicator}.
\end{example}

Based on the result in Lemma \ref{lemma: self_indicator}, the identification of missing indicators is well addressed.
\begin{theorem}[Identification of missing indicators]\label{thm: identification_indicator}
	Suppose Assumption 1 $\sim$ 3 in \cite{tu2019causal} holds, and further assume weak self-masking missingness and structure condition hold. The causal relations of missing indicators are identifiable.
\end{theorem}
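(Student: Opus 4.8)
\textbf{Proof proposal for Theorem \ref{thm: identification_indicator}.}

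The plan is to show that every missingness indicator $R_{V_i}$ can be correctly placed in the causal skeleton---that is, its parent set $Pa_{R_{V_i}}$ is uniquely recovered---by combining the test-wise deletion CI machinery with the self-masking detection procedure of Lemma \ref{lemma: self_indicator}. First I would partition the indicators into two groups: the weak self-masking indicators $\mathbf{R}_S = \{R_i \mid V_i \to R_i\}$ and the non-self-masking indicators $\mathbf{R}_{V\setminus S}$. For the latter group, I would invoke the argument already sketched after Theorem \ref{thm: identification_indicator}'s preamble: under Assumptions 1--3 of \citet{tu2019causal}, test-wise deletion CI relations involving $R_{V_i}$ coincide with those in the complete data, i.e. $R_{V_i}\Vbar V_j\mid\mathbf{Z} \Leftrightarrow R_{V_i}\Vbar V_j^*\mid\mathbf{Z}$, provided the conditioning set is itself handled correctly; here Theorem \ref{thm:self_ind} is what guarantees that the presence of $\mathbf{R}_S$ in any conditioning set does not corrupt the test. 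Hence a constraint-based search (PC-style) over variables and non-self-masking indicators recovers the adjacencies, and the parent set of each $R_{V_i}\in\mathbf{R}_{V\setminus S}$ is pinned down because indicators are sinks (Assumption 1 of \citet{tu2019causal}: missingness indicators are not causes), so every edge incident to $R_{V_i}$ must point into it.

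The crux is the self-masking group. For each candidate indicator $Z_i\in\mathbf{Z}$ (where $\mathbf{Z}$ denotes the set of indicators that the naive test-wise procedure would wrongly endow with spurious parents), I would apply Lemma \ref{lemma: self_indicator}: the structural condition (Assumption \ref{theorem:assumption5}) guarantees the existence of $X,Y\in\mathbf{V}\setminus\mathbf{Z}$ with $X\Vbar_d Y\mid\mathbf{Z}$ in the ground truth, and the lemma then certifies that $Z_i$ is self-masking precisely when we observe the conflict pattern: $\{X,Y\}\not\Vbar R_{Z_i}\mid\{R_X,R_Y\}$ together with $X\Vbar Y\mid\{\mathbf{Z},R_X,R_Y,\mathbf{R}_{\mathbf{Z}}\}$ (or its corrected analogue via Prop. \ref{lemma:recover}). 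The key point to verify is the \emph{equivalence}, not just one direction: a non-self-masking indicator with genuine parents $X$ (or $Y$) cannot produce this conflict, because if $R_{Z_i}$ truly had $X$ as a parent or $X,Y$ as ancestors, then conditioning on $R_{Z_i}$ (a collider descendant) inside the set $\{\mathbf{Z},R_X,R_Y,\mathbf{R}_{\mathbf{Z}}\}$ would---by faithfulness---open a path between $X$ and $Y$, contradicting $X\Vbar Y\mid\{\mathbf{Z},R_X,R_Y,\mathbf{R}_{\mathbf{Z}}\}$. So the conflict is a sound and complete signature of self-masking, which lets us strip the spurious edges and assign $Pa_{R_{Z_i}}=\{Z_i\}$ as Assumption \ref{theorem:assumption4} dictates.

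Finally I would assemble the pieces: run the corrected constraint-based skeleton search (using Theorem \ref{thm:self_ind} and Prop. \ref{lemma:recover} to make every CI test asymptotically correct), then for each indicator apply the Lemma \ref{lemma: self_indicator} test to decide self-masking-versus-not; for self-masking ones set the parent to itself, for the rest read off the parents from the recovered adjacencies and orient all incident edges toward the indicator. Correctness then follows from (a) Theorem \ref{thm:self_ind} and Prop. \ref{lemma:recover} giving faithful CI oracles, (b) Lemma \ref{lemma: self_indicator} giving exact self-masking detection, and (c) Assumption 1 of \citet{tu2019causal} forcing the orientation of indicator edges.

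The main obstacle I anticipate is the completeness direction of the self-masking test---ruling out \emph{false positives}, i.e. showing no non-self-masking indicator can accidentally exhibit the conflict pattern. This requires carefully tracking which d-connecting paths are opened or blocked when a descendant indicator is included in the conditioning set, and leaning on faithfulness to convert "no d-connection in ground truth" into "independence in the (recovered) distribution"; one must also make sure the structural condition in Assumption \ref{theorem:assumption5} supplies a witness pair $(X,Y)$ that is robust to the test-wise deletion, which is where the interplay between Assumption \ref{theorem:assumption5} and Prop. \ref{lemma:recover} matters most.
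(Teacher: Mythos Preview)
Your proposal is correct and follows essentially the same route as the paper: invoke the structural condition (Assumption~\ref{theorem:assumption5}) to supply a witness pair $X,Y$ with $X\Vbar_d Y\mid\mathbf{Z}$, then apply Lemma~\ref{lemma: self_indicator} to certify self-masking, and for non-self-masking indicators rely on the test-wise deletion argument of \citet{tu2019causal} (plus Theorem~\ref{thm:self_ind} to neutralise any $\mathbf{R}_S$ in the conditioning set). One remark: the obstacle you flag---ruling out false positives---is in fact already the content of Lemma~\ref{lemma: self_indicator} as stated (``$Z_i$ is self-masking \emph{if} [conflict pattern]''), and its proof is precisely the contradiction argument you sketch; so the only new work for Theorem~\ref{thm: identification_indicator} is the converse (completeness) direction, which the paper dispatches in two lines by observing that the structural condition guarantees the witness pair exists and that Theorem~\ref{thm:self_ind} makes the conflict pattern observable for every genuine self-masking indicator.
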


By this, one may learn the causal skeleton and the causal relations of missing indicators correctly based on the above theoretical results. A practical identification algorithm (\textit{SM-MVPC}) is presented in the `algorithm' section. Now, we are capable to cope with the next issue\textemdash identification of causal direction.

\subsection{Identification of Causal Direction}

In this section, we aim to address the identification of causal directions in the learned causal skeleton by ANM. One of the straightforward ways to identify the causal direction is to recover all relevant distributions that are required for ANM identifiability. However, the recovery procedure is generally harder and could be inaccurate. Thus, in this work, instead of correcting all distributions, we aim to investigate the identification of causal direction using only the independence noise property in ANM based on the learned causal skeleton.

To do so, we first discuss the identifiable ANM under the complete data and then investigate the identifiability of ANM under the missing data, resulting in a sufficient and necessary condition for that identifiability (Theorem \ref{thm:missing_anm_condition}). To further leverage the identifiability results from ANM, we formulate the IN-equivalent pattern to characterize the independent noise property (Definition \ref{def:in_pattern}), which allows us to orient the causal direction whenever possible by an insightful orientation rule (Theorem \ref{Orientation Rule}).

The key to solving the identification of causal direction is the independent noise property of ANM. Below, we will show how the causal direction is identified by ANM in the complete data, which defines the concept of `\textit{identifiable}'.

\begin{remark}
	Given an ANM $V_i=f_i(Pa_{V_i})+\varepsilon_i$ in complete data, one can identify the causal direction by testing the independence between the residuals of regression and the hypothesis cause, such that the independence holds only in the correct causal direction, e.g., $Pa_{V_i} \Vbar V_i - f_i(Pa_{V_i})$.
\end{remark}

However, when data contain missing values, the independence between residual and cause variables may no longer holds. For example, as shown in Fig. \ref{fig:illust_simple_example}, the ANM of $X \rightarrow Y$ is not identifiable because
$Y^* -f_i(X) \not\bot X$ in missing data. Thus, it is necessary to understand in what cases the ANM is identifiable or not in the presence of missing data. In the following theorem, we provide a sufficient and necessary condition for the identifiability of an ANM causal pair under the missing data.

\begin{theorem}[Identifiability of ANM in missing data]\label{thm:missing_anm_condition}
	Given an m-graph $G$, an additive noise model
	\begin{equation*}
		V_i=f_i(Pa_{V_i})+\varepsilon_i,\quad Pa_{X_i}\Vbar \varepsilon_i
	\end{equation*}
	is identifiable in complete data but not identifiable in missing data if and only if there exists a directed path in $G$ that starts from one of the parent $V_j \in Pa_{V_i}$ and ends at missing indicator $R_{V_j}$ or $R_{V_i}$.
\end{theorem}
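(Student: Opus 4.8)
The plan is to prove both directions by characterizing exactly when the independence $Pa_{V_i} \Vbar V_i - f_i(Pa_{V_i})$ that certifies identifiability in complete data is destroyed by conditioning on the missingness pattern. Since we only have access to test-wise deleted data, estimating the regression of $V_i$ on $Pa_{V_i}$ means working within the event $\{R_{V_i} = 0,\ \mathbf{R}_{Pa_{V_i}} = 0\}$, i.e. we must examine whether $Pa_{V_i} \Vbar \varepsilon_i \mid R_{V_i}, \mathbf{R}_{Pa_{V_i}}$ holds. The key object is therefore the d-separation / d-connection status of $\varepsilon_i$ and $Pa_{V_i}$ in the m-graph once the relevant indicators are in the conditioning set. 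I would set this up by adding the noise node $\varepsilon_i$ explicitly as a root with $\varepsilon_i \to V_i$, so that $V_i - f_i(Pa_{V_i})$ is (a deterministic function reading off) $\varepsilon_i$, and then the claim becomes: $\varepsilon_i$ is d-connected to some $V_j \in Pa_{V_i}$ given $\{R_{V_i}, \mathbf{R}_{Pa_{V_i}}\}$ iff there is a directed path from some $V_j \in Pa_{V_i}$ to $R_{V_j}$ or to $R_{V_i}$.

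For the "if" direction (non-identifiability), suppose there is a directed path from $V_j \in Pa_{V_i}$ to $R_{V_j}$. Then $R_{V_j}$ is a descendant of $V_j$; since $\varepsilon_i \to V_i \leftarrow V_j$ forms a collider at $V_i$, but more directly $V_i$ is a common child of $\varepsilon_i$ and $V_j$, and $R_{V_i}$ is conditioned on — wait, the cleaner route: $V_j$ has a directed path to $R_{V_j}$, which is in the conditioning set, so the path $\varepsilon_i \to V_i \leftarrow \cdots$ is not what we want; instead use that $R_{V_j}$ being conditioned on activates the path $\varepsilon_i \to V_i \leftarrow V_j \rightsquigarrow R_{V_j}$ only if $V_i$ is a non-collider-side... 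Let me restate: the path $\varepsilon_i \to V_i \leftarrow V_j$ has a collider at $V_i$; conditioning on $R_{V_i}$ (a descendant of $V_i$) opens it, and then $V_j$ is reached, giving $\varepsilon_i \not\Vbar V_j \mid \{R_{V_i},\dots\}$. This already handles the "$R_{V_i}$" case. For the "$R_{V_j}$" case, the path $\varepsilon_i \to V_i \to \cdots$? No — use $V_i \to \text{(something on the }V_j\rightsquigarrow R_{V_j}\text{ path)}$; more robustly, conditioning on $R_{V_j}$, a descendant of $V_j$, combined with the collider at $V_i$ opened by conditioning on $R_{V_i}$... Actually the honest statement is: I will show that in each of the two cases one can exhibit an explicit d-connecting path from $\varepsilon_i$ to the relevant $V_j$ given $\{R_{V_i}, \mathbf{R}_{Pa_{V_i}}\}$, invoking faithfulness (Assumption in \citet{tu2019causal}) to turn d-connection into genuine statistical dependence, which breaks the independence-of-residuals test in every direction and hence makes the pair unidentifiable.

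For the "only if" direction, I would argue the contrapositive: if no such directed path exists, then neither $R_{V_i}$ nor any $R_{V_j}$ ($V_j \in Pa_{V_i}$) is a descendant of any parent of $V_i$ via a path that could open the collider at $V_i$, so conditioning on $\{R_{V_i}, \mathbf{R}_{Pa_{V_i}}\}$ does not d-connect $\varepsilon_i$ to $Pa_{V_i}$ (here I lean on Assumption 1 of \citet{tu2019causal}, that indicators are never causes, so the only way an indicator enters a path is as an endpoint/collider-descendant, plus Theorem~\ref{thm:self_ind} / Corollary~\ref{coro:ci_self} to control the weak-self-masking indicators). Consequently $Pa_{V_i} \Vbar \varepsilon_i \mid \{R_{V_i}, \mathbf{R}_{Pa_{V_i}}\}$ holds, so the test-wise deleted data still satisfy the ANM in the true direction, while by the complete-data identifiability hypothesis the independence fails in every wrong direction; combined with the recoverability of the relevant conditionals up to $P(\cdot \mid \mathbf{R}_S)$ from Prop.~\ref{lemma:recover} — which does not affect the argument since self-masking indicators of $Pa_{V_i} \cup \{V_i\}$ only arise precisely in the excluded case — the pair is identifiable. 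The main obstacle I anticipate is the bookkeeping in the "if" direction: carefully enumerating how a directed path ending at $R_{V_j}$ versus at $R_{V_i}$ opens a collider, making sure the weak-self-masking case ($V_j \to R_{V_j}$ as a length-one path) is covered and that faithfulness legitimately upgrades d-connection to dependence despite the test-wise deletion, i.e. that the d-connecting path does not pass through a node outside the deleted sub-population in a way that trivializes it.
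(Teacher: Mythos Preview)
Your plan is essentially the paper's own argument: both directions turn on whether $\varepsilon_i$ remains d-separated from $Pa_{V_i}$ once $\{R_{V_i},\mathbf{R}_{Pa_{V_i}}\}$ is conditioned on, with faithfulness upgrading d-connection to dependence in the ``if'' direction and, for the ``only if,'' a case split (the relevant indicator is either independent of $\{V_i,Pa_{V_i}\}$ or a descendant of $Pa_{V_i}$ but not of $V_i$, whence $p(V_i\mid Pa_{V_i},R)=p(V_i\mid Pa_{V_i})$) that is exactly your d-separation argument unpacked. For the ``if'' direction the paper, like your sketch, simply takes the relevant $R_{V_k}$ to be a common descendant of $V_i$ and $Pa_{V_i}$ so that the collider at $V_i$ opens, and then invokes faithfulness---so the bookkeeping you flag as the main obstacle is handled in the paper at the same level of informality you anticipate.
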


\begin{example} [An example for illustrating Theorem \ref{thm:missing_anm_condition}]
	As shown in Fig. \ref{fig:illust_simple_example}, one can see that the ANM in the causal pair between $Y, Z$ and $W$ is identifiable even though there exist self-masking variables. However, the causal pair in between $X, Y$ is non-identifiable due to there exists a directed path from $X$ to $R_Y$.
\end{example}

Theorem \ref{thm:missing_anm_condition} has a graphical implication: in the missing data, the identification of ANM is determined by the structural position of the missing indicator in the causal skeleton. To further characterize the identification of ANM in a causal skeleton, we devise the \textbf{I}ndependent \textbf{N}oise-equivalent (IN-equivalent) pattern using a partially directed graph:

\begin{definition}[IN-equivalent pattern]\label{def:in_pattern}
	An IN-equivalent pattern is a partially directed m-graph, which has the identical adjacency as the original m-graph and which has oriented edge $Pa_{V_{i}}\rightarrow V_{i}$ if and only if $Pa_{V_{i}} \Vbar V_{i} -f_{i}( Pa_{V_{i}}) |\{R_{V_{i}} ,\mathbf{R}_{Pa_{V_{i}}} \}$ where $f_{i}$ is a regression of $\displaystyle V_{i}$ on $Pa_{V_{i}}$.
\end{definition}

In other words, the IN-equivalent pattern encodes the set of independent noise of ANM, where the direct edge represents the identifiable ANM in this edge while the bi-directed edge represents that there is no distinguishable independent noise of ANM for identifying such causal direction.

Similar to Meek's rule \cite{meek1995causal} in the Markov equivalence pattern, given the IN-equivalent pattern, we are able to further orient the undetermined edges by making no identifiable ANM structure and no cycle in the pattern. Our idea is derived from the fact that if (i) an undirected edge $X - Y$ in the IN-equivalent pattern is not identified as $X \rightarrow Y$ (causal direction) due to Theorem \ref{thm:missing_anm_condition}, and (ii) ANM reject the reverse direction $X \leftarrow Y$ by the violation of independent noise, then we can infer $X \rightarrow Y$ uniquely because unidentifiable ANM for $X \rightarrow Y$ is only caused by (i). To characterize such fact and further consider the no cycle constraint, we are able to graphically characterize the identifiability of ANM in missing data using the following potential non-identifiable paths:

\begin{definition}[Potential Non-identifiable Paths]
	For a node $V_i \in \mathbf{V}$, the potential non-identifiable paths $\mathbf{P}_i$ is a set of paths w.r.t. $V_i$ on the missingness graph $G(\mathbb{V},\mathbf{E})$ such that for each path in $\mathbf{P}_i$ satisfies: 1) each edge in the path is either $\rightarrow $ or $-$; 2) the path starts from $V_{j} \in \{V_{j} |V_{j} - V_i \in \mathbf{E}\}$ which is one of the undirected neighbors of $V_i$, and the corresponding end point is $R_{V_i}$ or $R_{V_j}$; 3) for every undirected edge $V_k - V_l$ in the path it can not create cycle in the graph if we orient $V_k\rightarrow V_l$, i.e., there does not exist a directed path from $V_l$ to $V_k$.
\end{definition}

Based on the potential non-identifiable paths, we conclude the following orientation rule:

\begin{theorem}[Orientation Rule]\label{Orientation Rule}
	Given an IN-equivalent pattern represented by a partially directed m-graph $G(\mathbb{V},\mathbf{E} )$, where edges $\mathbf{E}$ may contain directed edges and undirected edges. If the potential non-identifiable paths of $V_i$ is empty, then we orient every undirected neighbor $V_j \in \{V_j|V_j - V_i\}$ as $V_i\to V_j$.
\end{theorem}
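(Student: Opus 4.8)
The plan is to argue by contradiction, using Theorem~\ref{thm:missing_anm_condition} as the engine: it says the \emph{only} way an ANM that is identifiable on complete data fails to be identifiable under missingness is the existence of a directed path from a parent $V_k\in Pa_{V_i}$ to $R_{V_k}$ or $R_{V_i}$. Let $G^{*}$ denote the true m-graph. Assume $\mathbf{P}_i=\emptyset$ and suppose, for contradiction, that some undirected neighbour $V_j$ of $V_i$ in the pattern is \emph{not} oriented $V_i\to V_j$ in $G^{*}$. Since the IN-equivalent pattern shares adjacencies with $G^{*}$, the nodes $V_i,V_j$ are adjacent in $G^{*}$, so the only remaining possibility is $V_j\to V_i$ in $G^{*}$, i.e.\ $V_j\in Pa_{V_i}$.

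Next I would use the construction of the IN-equivalent pattern (Definition~\ref{def:in_pattern}) to localise the cause of the missing orientation. The pattern orients the \emph{entire} parent set $Pa_{V_i}\to V_i$ in one step, precisely when $Pa_{V_i}\Vbar V_i-f_i(Pa_{V_i})\mid\{R_{V_i},\mathbf{R}_{Pa_{V_i}}\}$. Since the edge $V_j-V_i$ is undirected, this independence must fail; and because the paper restricts attention to ANM classes identifiable on complete data, the ANM into $V_i$ is identifiable on complete data but \emph{not} identifiable under missingness. Hence Theorem~\ref{thm:missing_anm_condition} yields a directed path $\pi$ in $G^{*}$ starting at some $V_k\in Pa_{V_i}$ and ending at $R_{V_k}$ or $R_{V_i}$.

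The crux is to check that $\pi$, viewed in the pattern, meets the three defining conditions of a potential non-identifiable path of $V_i$, which contradicts $\mathbf{P}_i=\emptyset$. For condition~(1): every non-terminal node of $\pi$ lies in $\mathbf{V}$, because under Assumptions~1--3 of \citet{tu2019causal} missingness indicators are sinks (not causes, no interactions among indicators), so each edge of $\pi$ is also an edge of the pattern; moreover the pattern only ever orients edges of the form $Pa\to\cdot$ (into variables by Definition~\ref{def:in_pattern}, into indicators by Theorem~\ref{thm: identification_indicator}), which agrees with $G^{*}$, so no edge of $\pi$ is reversed in the pattern and each edge is therefore $\to$ or $-$. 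For condition~(2): the source $V_k\in Pa_{V_i}$ satisfies $V_k\to V_i$ in $G^{*}$, but since the ANM into $V_i$ is non-identifiable this edge is undirected in the pattern, so $V_k$ is an undirected neighbour of $V_i$, and $\pi$ ends at $R_{V_k}$ or $R_{V_i}$, exactly as required. For condition~(3): if some undirected edge $V_a-V_b$ of $\pi$ (oriented $V_a\to V_b$ in $G^{*}$) could be oriented $V_a\to V_b$ only by creating a cycle, there would be a directed path $V_b\rightsquigarrow V_a$ in the pattern, hence in $G^{*}$, contradicting the acyclicity of $G^{*}$.

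I expect the main obstacle to be exactly this bookkeeping: showing that the directed path supplied by Theorem~\ref{thm:missing_anm_condition} survives the projection onto the partially directed pattern without being ``broken'' — i.e.\ carefully justifying that the pattern never reverses an edge of $G^{*}$, that intermediate nodes are genuine variables so their incident edges persist with an admissible orientation, and that the no-cycle requirement reduces cleanly to acyclicity of $G^{*}$. Once these are in place, the existence of $\pi\in\mathbf{P}_i$ contradicts $\mathbf{P}_i=\emptyset$; since $V_j$ was an arbitrary undirected neighbour of $V_i$, every such $V_j$ must satisfy $V_i\to V_j$ in $G^{*}$, which is precisely the orientation rule.
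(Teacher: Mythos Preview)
Your proposal is correct and follows essentially the same approach as the paper: argue by contradiction, use the failed independence at $V_i$ together with Theorem~\ref{thm:missing_anm_condition} to produce a directed path from some parent $V_k\in Pa_{V_i}$ to $R_{V_k}$ or $R_{V_i}$, and observe that this path is a potential non-identifiable path of $V_i$, contradicting $\mathbf{P}_i=\emptyset$. The paper's own proof is terser than yours---it simply asserts that the resulting path ``must be one of the potential non-identifiable paths'' without explicitly verifying conditions (1)--(3)---so your bookkeeping on edge-type preservation, the undirected status of $V_k-V_i$, and the acyclicity reduction is a welcome expansion rather than a different route.
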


\begin{example} [Illustrating for orientation rule]
	Take Fig. \ref{fig:anm_selfmasking_example}(c) as an example, for node $Y$, there exists one potential non-identifiable path $X-Y\to R_X$. Meanwhile, for $X$ there does not exist such type of path. We infer  $X\to Y$ by Theorem \ref{Orientation Rule}.
\end{example}

Intuitively, since all IN-equivalent graphs share the same independence noise, if there exists an orientation that introduces a new independence noise, then such an orientation is invalid as it will break the equivalent pattern. Thus, if we can find a unique structure that makes no identifiable ANM structure in missing data, then we can orient the edge whenever possible.

\section{The algorithms for Learning Causal Structure from Missing Data}

In this section, we provide the implementation of identifiability results, consisting of two practical algorithms for learning the causal skeleton and causal direction, respectively.

\begin{algorithm}[h]
	\caption{Self-Masking MVPC (Simplified)}
	
	% \KwIn{this text}
	% \KwOut{how to write algorithm with \LaTeX2e }
	\tcp{{\color{RoyalBlue} \text{\textrm{\footnotesize Learning skeleton by PC algorithm in deleted data} }}}
	$G \leftarrow$ Test-wise deletion PC \cite{spirtes2000causation}; \\
	\tcp{{\color{RoyalBlue} \text{\textrm{\footnotesize Learning missing indicators in skeleton } }}}
	Detect direct causes of missingness indicators by testing CI relations between each missingness indicator and the variables in $\mathbf{V}$; \\
	Find the weak self-masking indicators by Lemma \ref{lemma: self_indicator}; \\
	\tcp{{\color{RoyalBlue} \text{\textrm{\footnotesize Reconstructing the causal skeleton by a correcting strategy } }}}
	Detecting potential extraneous edges and recovering the true causal skeleton in an iteration way; \\
	\label{alg:self-mvpc}
\end{algorithm}

As shown in Algorithm 1 (named as Self-Masking MVPC, abbreviated as SM-MVPC), it is an extension of MVPC \cite{tu2019causal} for learning causal skeleton allowing the presence of self-masking missingness. Here, due to space limitation, we only present a simplified version of the full algorithm, and the complete algorithm is given in Appendix. In SM-MVPC, we first apply the classic PC algorithm in deleted data to obtain an initial skeleton among $\mathbf{V}$ and further to find the missingness indicators based on Theorem \ref{thm: identification_indicator}. Finally, we iteratively remove potential redundant edges in the recovered distribution (Prop. \ref{lemma:recover}) to obtain the true causal skeleton.

Next, we provide an algorithm for learning causal direction based on the output of SM-MVPC, as shown in Algorithm 2 (\textit{LCS-MD}). We first learn the causal direction by enumerating all candidate parent sets of each node and testing the independence between residuals of regression and hypothesis causal variables. The above procedure outputs an IN-equivalent pattern. Next, we orient the undirected edges according to the orientation rules (Theorem \ref{Orientation Rule}). Our algorithm outputs a partially directed acyclic graph with maximum direction information.

\begin{algorithm}[h]
	\caption{Learning Causal Structure from Missing Data (LCS-MD)}
	\KwIn{The causal skeleton $G$ (including all missing indicators) learned by \textit{SM-MVPC} and  the dataset $\mathcal{D}$ with the observed variable set $\mathbf{V}$}
	\KwOut{A partially directed acyclic graph among the observed variables }
	%begin with a comments
	\tcp{{\color{RoyalBlue} \text{\textrm{\footnotesize Estimating IN-Equivalent pattern } }}}
	\For {$V_i \in \mathbf{V}$}{
		Find the maximum parents set from adjacent with $\operatorname{Pa}_{V_i} \subseteq \operatorname{Adj}_{V_i}$;\\
		\If{$V_i-f_i(\operatorname{Pa}_{V_i})\bot \operatorname{Pa}_{V_i}$}{
			Orient $\operatorname{Pa}_{V_i} \rightarrow V_i$ in $G$; \\
		}
		
	}
	\tcp{{\color{RoyalBlue} \text{\textrm{\footnotesize Perform Orientation rule on IN-equivalent pattern } }}}
	
	\While{no undirected edges in $G$ can be oriented; }{
		Orient the undirected edges based on Theorem \ref{Orientation Rule}.
	}
	\textbf{Return}  partially directed acyclic graph $G$
	\label{alg:LCS-MD}
\end{algorithm}

\begin{theorem}[Soundness of LCS-MD]
	Suppose that the data over variables $\mathbf{V}$ was generated by ANM and assumptions 1 $\sim$ 3 in \cite{tu2019causal}, assumptions of weak self-masking missingness and structural condition hold. Let $G$ denote the output of LCS-MD, all directed edge in $G$ is correctly oriented.
\end{theorem}

\section{Experiments}

In this section, we verify the proposed method with baselines through simulation studies and real-world dataset studies. Our baseline methods include MVPC \cite{tu2019causal} and MissDAG \cite{gao2022missdag}. We evaluate the methods in estimating the skeleton and inferring causal directions, respectively.

\subsection{Synthetic Experiments}

In synthetic experiments, we conducted two different control experiments: (1) the sensitivity of sample size, and (2) the sensitivity of the number of weak self-masking missingness for testing the performance of causal skeleton learning (including the missingness indicators) and causal structure learning, respectively. Each experiment is conducted on MNAR with the existence of weak-self masking missingness. The sensitivity experiments are controlled by traversing the controlled parameter while keeping other setting fixed as default. The ranging of sample size, and the number of weak self-masking missingness: $\{3000,5000,\mathbf{7000},9000,15000\}$,  $\{1,2,\mathbf{3},4\}$, respectively. The default setting is marked as bold. Each experiment is conducted 100 times with different seeds. The causal graph has 10 variables with at least 3 missing variables.

Our data generation process follows the constraint of the assumptions in this work and the missing data are generated according to the missingness indicators following \citet{liu2021greedy}. In detail, for each causal pair, $f_i$ is constructed from a two-layer random MLP with 50 hidden layers $V_i=f(Pa_{V_i})+\varepsilon_{V_i}$, where the noise $\varepsilon_{V_i}\sim \mathcal{U}(-1,1)$. 

In all experiments, Structural Hamming Distance (SHD), Precision, Recall, and F1 are used as the evaluation metrics. For the independence test, the Hilbert-Schmidt Independence Criterion (HSIC) \cite{DBLP:journals/jmlr/GrettonHSBS05,zhang2018large} test is used, and we set the significance level as $\alpha = 0.01$. Due to the space limitation, only the F1 metric is provided, other metrics are provided in the supplementary material.

\paragraph{Results in Causal Skeleton Identification.}
Results are tested with the skeleton of the ground truth. As shown in Fig. \ref{fig:sensitivity}, one can see that SM-MVPC achieves the best performance in all cases since existing methods are not designed for the MNAR with self-masking cases, which shows the effectiveness of our proposal in learning causal skeletons. 

For the sensitivity of the number of weak self-masking variables given in Fig. \ref{fig:sensitivity}(b), our method is not sensitive while the performance of MVPC decreases as the number of weak self-masking variables increases. It suggests the correctness of our method and the existence of redundant edges in MVPC due to the self-masking variables. For other methods, based on Theorem \ref{thm:self_ind} it is reasonable to be not sensitive. In addition, in Fig. \ref{fig:sensitivity}, the average accuracy of the identification of self-masking increases as the sample size grows which also verifies the effectiveness in identifying the missing indicator.

\begin{figure*}[t]
	\centering
	\subfigure[Sensitivity of sample size]{
		\includegraphics[height=0.22\textwidth,width=0.3\textwidth]{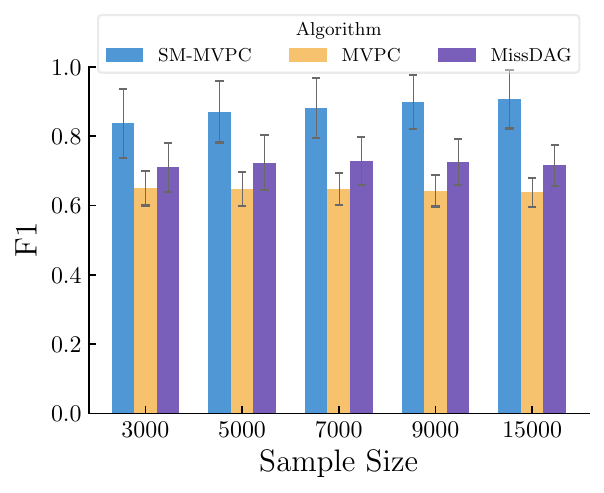}
		\label{fig:ske:Sample_SizelinearSMNARF1}
	}
	\subfigure[Sensitivity of the weak self-masking]{
		\includegraphics[height=0.22\textwidth,width=0.3\textwidth]{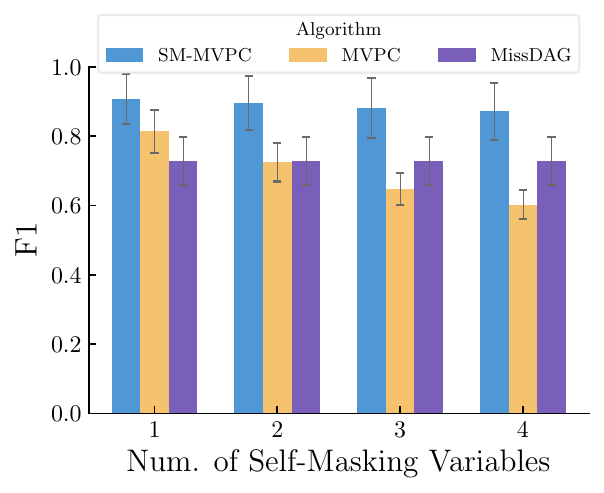}
		\label{fig:ske:Num_of_self_nodelinearSMNARF1}
	}
	\subfigure[Accuracy of self-masking indicator]{
		\includegraphics[height=0.22\textwidth,width=0.3\textwidth]{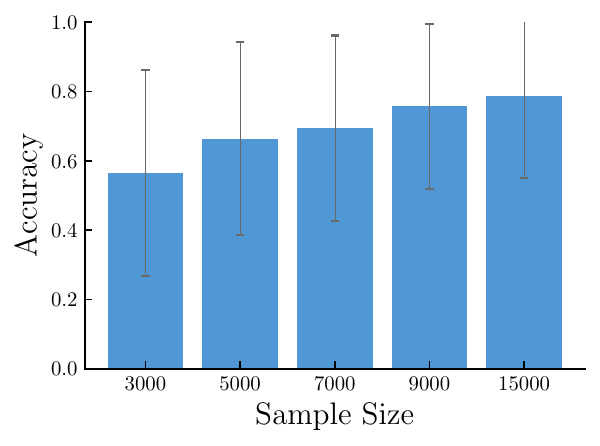}
		\label{fig:ske:IndegreelinearSMNARF1}
	}
	\caption{Experiments for Skeleton Learning}	
	\label{fig:sensitivity}
\end{figure*}

\begin{figure*}[t]
	\centering
	\subfigure[Sensitivity of sample size]{
		\includegraphics[height=0.22\textwidth,width=0.3\textwidth]{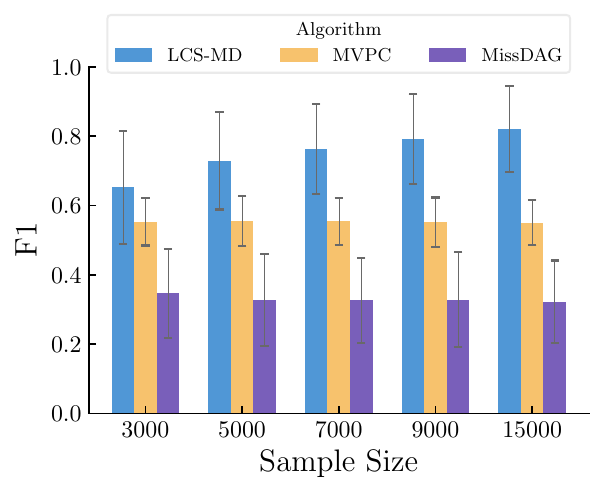}
		\label{fig:ori:Sample_SizelinearSMNARF1}
	}
	\subfigure[Sensitivity of the weak self-masking]{
		\includegraphics[height=0.22\textwidth,width=0.3\textwidth]{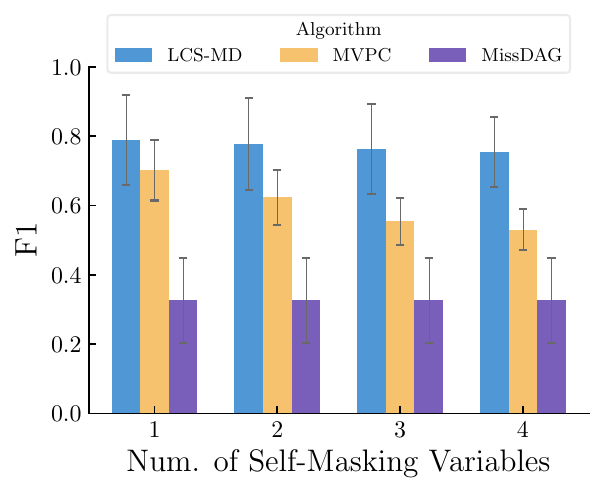}
		\label{fig:ori:Num_of_self_nodelinearSMNARF1}
	}
	\subfigure[Applying rule with correct skeleton]{
		\includegraphics[height=0.22\textwidth,width=0.3\textwidth]{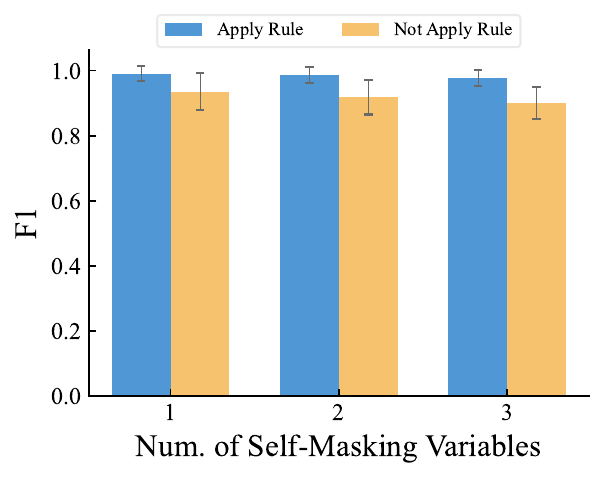}
		\label{fig:ori:IndegreelinearSMNARF1}
	}
	\caption{Experiments for Structure Learning}	
	\label{fig:sensitivity_IN_equivalent}
\end{figure*}

\paragraph{Results in Causal Direction Identification.}
For learning causal structure, as shown in Fig. \ref{fig:sensitivity_IN_equivalent}, our method outperforms all the baseline methods. Moreover, compared with the experiments in the skeleton, the performance gap with baseline methods is even larger, which is thanks to the identifiability of ANM. As for the sensitivity, all methods are sensitive to the sample size. While our method is not sensitive to the weak self-masking variables but MVPC is sensitive. The above results are consistent with the experiments on the skeleton. Moreover, to verify the effectiveness of the orientation rule, we apply the rule on a correct IN-equivalent pattern and the results in Fig. \ref{fig:sensitivity_IN_equivalent} show promising as it successfully identified most of the directions that are not identifiable by ANM.

\subsection{Real World Experiments}

In this section, we applied our method to a real-world dataset, Cognition and Aging in the Chronic Fatigue Syndrome (CFS), provided by \cite{heins2013process}. The dataset contains six variables with 183 records: fatigue, focusing on symptoms (focusing), sense of control over fatigue (control), objective activity (oActivity), physical activity (pActivity), and physical functioning (functioning). It is designed to investigate the cause of fatigue. Moreover, this dataset contains a few missing values. The result is given in Fig. \ref{fig:real_world}, and LCS-MD finds that pAactivity, focusing, control and functioning are the direct causes of fatigue which is consistent with the conclusion drawn by \cite{heins2013process,rahmadi2017causality}. In addition, objective activity also has an indirect effect on fatigue by the pActivity variable. This is reasonable because physical activity (pActivity) also belongs to one type of objective activity. Overall, the result verifies the effectiveness of our method.

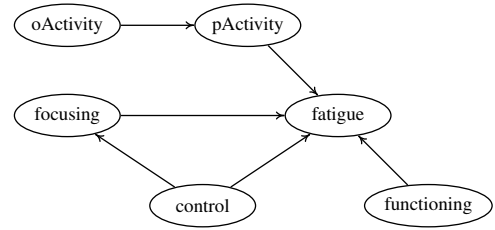
\begin{figure}[t]
	\begin{center}
		\begin{tikzpicture}[scale=1.2, line width=0.5pt, inner sep=0.2mm, shorten >=.1pt, shorten <=.1pt]
			
			\tikzstyle{every node}=[font=\small,scale=0.8,minimum height =20pt,minimum width =50pt]

			\draw (0, 0) node(L1) [ellipse,draw, align=center]{oActivity};

			\draw (2, 0) node(L2) [ellipse,draw, align=center]{pActivity};

			\draw (0, -1) node(L3) [ellipse,draw, align=center]{focusing};

			\draw (1.5, -2) node(L4) [ellipse,draw, align=center]{control};

			\draw (4, -2) node(L6) [ellipse,draw, align=center]{functioning};

			\draw (3, -1) node(L5) [ellipse,draw, align=center]{fatigue};

			\draw[-arcsq] (L1) -- (L2) node[pos=0.5,sloped,above] {};
			
			\draw[-arcsq](L2) -- (L5) node[pos=0.5,sloped,above] {};
			
			\draw [-arcsq](L3) -- (L5) node[pos=0.5,sloped,above] {};
			\draw [-arcsq](L4) -- (L3) node[pos=0.5,sloped,above] {};
			
			\draw [-arcsq](L6) -- (L5) node[pos=0.5,sloped,above] {};
			
			\draw [-arcsq](L4) -- (L5) node[pos=0.5,sloped,above] {};

		\end{tikzpicture}\\
		\caption{Result from LCS-MD in the Chronic Fatigue Syndrom (CFS) study.}  \label{fig:real_world} 
	\end{center}
\end{figure}

\section{Conclusion}
In this work, we studied the causal discovery problem on the observational data with missing values. By taking the advantage of additive noise model, we greatly extend the identifiability results of the causal discovery methods, including that the causal skeleton is identifiable up to the case with weak self-masking missingness and the causal direction is identifiable up to an IN-equivalent pattern. Based on the theoretical results, we propose two algorithms, SM-MVPC and LCS-MD, to discover the causal structure on the data with missing values. The proposed theorems and the algorithms take a meaningful step in understanding the missingness mechanism. How to relax the assumptions and how to improve efficiency would be interesting future directions.

\section{Acknowledgement}
This research was supported in part by National Key R\&D Program of China (2021ZD0111501), National Science Fund for Excellent Young Scholars (62122022), Natural Science Foundation of China (61876043, 61976052), the major key project of PCL (PCL2021A12). ZM's research was supported by the China Scholarship Council (CSC).

\bibliography{aaai24}
\appendix
\clearpage
\newtheorem{innercustomthm}{Theorem}
\newenvironment{customthm}[1]
{\renewcommand\theinnercustomthm{#1}\innercustomthm}
{\endinnercustomthm}

\newtheorem{innercustomcor}{Corollary}
\newenvironment{customcor}[1]
{\renewcommand\theinnercustomcor{#1}\innercustomcor}
{\endinnercustomcor}

\newtheorem{innercustomlem}{Lemma}
\newenvironment{customlem}[1]
{\renewcommand\theinnercustomlem{#1}\innercustomlem}
{\endinnercustomlem}

\newtheorem{innercustomprop}{Proposition}
\newenvironment{customprop}[1]
{\renewcommand\theinnercustomprop{#1}\innercustomprop}
{\endinnercustomprop}

\newtheorem{innercustomremark}{Remark}
\newenvironment{customremark}[1]
{\renewcommand\theinnercustomremark{#1}\innercustomremark}
{\endinnercustomremark}

\numberwithin{equation}{section}
\setcounter{theorem}{0}
\setcounter{lemma}{0}
\setcounter{remark}{0}
\setcounter{proposition}{0}
\setcounter{corollary}{0}
\setcounter{section}{0}
\setcounter{secnumdepth}{1}
\section{Notation and Terminology}

\begin{table}[h]
	\centering
	\small
	\begin{tabular}{c|c}
		\toprule
		Symbols & Definitions and Descriptions \\ 
		\midrule
		$Pa_{V_i}$       & parent of $V_i$, i.e., $Pa_{V_i}=\{V_j|V_j \to V_i\}$                     \\
		$Ch_{V_i}$       & children of $V_i$, i.e., $Ch_{V_i}=\{V_j|V_i\to V_j\}$                 \\
		$Anc_{V_i}$       & ancestors of $V_i$, i.e., $Anc_{V_i}=\{V_j|V_j \rightsquigarrow V_i\}$       \\
		$Des_{V_i}$       & descendants of $V_i$, i.e., $Des_{V_i}=\{V_j|V_i \rightsquigarrow V_j\}$       \\
		$Adj_{V_i}$       & adjacent of $V_i$, i.e., $Adj_{V_i}=\{V_j|V_j - V_i\}$    \\
		$R_{V_i}$               &missingness indicator of missing variable $V_i$\\
		$P(V)$      & the distribution of $V$\\
		$CI(V_i, V_j|\mathbf{V_k})$          &  CI relation between $V_i$ and $V_j$ given $\mathbf{V_k}$\\
		\bottomrule
	\end{tabular}
	\caption{Table of Notations}
\end{table}

Below, we provide some graphical notation used in our work, which is mainly derived from the \cite{pearl2009causality,spirtes2000causation}.

\begin{definition}[Path and Directed Path]
	In a DAG, a \textbf{path} $P$ is a sequence of nodes $(V_1,...V_r)$ such that $V_i$ and $V_{i+1}$ are adjacent in $\mathcal{G}$, where $1 \le i<r$. Further, we say a path $P=( V_{i_0} ,V_{i_1} ,\dots ,V_{i_{k}})$ in $G$ is a \textbf{directed path} if it is a sequence of nodes of $G$ where there is a directed edge from $V_{i_j}$ to $V_{i_{(j+1)}}$ for any $0\leq j\leq k-1$.
\end{definition}

\begin{definition}[Collider]
	A \textbf{collider} on a path $\{V_1,...V_p\}$ is a node $V_i$ , $1< i < p$, such that $V_{i-1}$ and $V_{i+1}$ are parents of $V_{i}$.
\end{definition}

Graphically, we also say a collider is a `V-structure'.

\begin{definition}[d-separation]
	A path $p$ is said to be d-separated (or blocked) by a set of nodes $\mathbf{Z}$ if and only if
	\begin{itemize}
		\item $p$ contains a chain $V_i \to V_k \to V_j$ or a fork $V_i \leftarrow V_k \rightarrow V_j$ such that the middle node $V_k$ is in $\mathbf{Z}$, or 
		\item $p$ contains a collider $V_i \to V_k \leftarrow V_j$ such that the middle node $V_k$ is not in $\mathbf{Z}$ and such that no descendant of $V_k$ is in $\mathbf{Z}$.
	\end{itemize}
\end{definition}
A set $\mathbf{Z}$ is said to d-separate $\mathbf{A}$ and $\mathbf{B}$ if and only if $\mathbf{Z}$ blocks every path from a node in $\mathbf{A}$ to a node in $\mathbf{B}$. We also denote as $\mathbf{A} \bot \mathbf{B} |\mathbf{Z}$ in the causal graph model.

\begin{definition}[Causal skeleton]\label{def:skeleton}
	A undirected graph $S = (\mathbf{V}_{S}, \mathbf{E}_{S})$ represents the skeleton of a causal DAG $\mathcal{G}=(\mathbf{V}_{\mathcal{G}},\mathbf{E}_{\mathcal{G}})$ if
	\begin{equation}
		(X \rightarrow Y) \in \mathbf{E}_{\mathcal{G}} \vee (Y \rightarrow X) \in \mathbf{E}_{\mathcal{G}}	\Longleftrightarrow (X - Y) \in \mathbf{E}_{S}
	\end{equation}
\end{definition}

Here, we further provide the details to the assumption 1 $\sim$ 3 by \cite{tu2019causal}, which also is required in our paper (we give a brief description in the `\textbf{Problem Definition}' of the main paper due to the space limitation).

\begin{assumption}[Missingness indicators are not causes]\label{theorem:assumption1}
	No missingness indicator can be a cause of any variable.
\end{assumption}

\begin{assumption}[Conditional independence relations in the observed data also hold in the complete data]\label{theorem:assumption2}
	Any conditional independence relation in the observed data
	also holds in the unobserved data; formally, 
	$X\bot Y |\{\mathbf{Z},\mathbf{R_K}=0\} \Leftrightarrow X\bot Y |\{\mathbf{Z},\mathbf{R_K}=1\}$.
\end{assumption}

\begin{assumption}[No causal interactions between missingness indicators]\label{theorem:assumption3}
	No missingness indicator can be a deterministic function of any other missingness indicator.
\end{assumption}

These assumptions are widely used in dealing with missing data, such as \cite{mohan2013graphical,tu2019causal,mohan2018estimation,little2019statistical,gao2022missdag}.

Furthermore, we provide the existing theorem used in our work. That is, the notation of `identifiable'.

\begin{definition}[Identifiable ANM in complete data] \label{def:identifiable}
	An additive noise model $V_i=f_i(Pa_{V_i})+\varepsilon_i$ is identifiable if the joint distribution $P(V_i,Pa_{V_i})$ admits the additive noise model in the causal direction such that $Pa_{V_i} \Vbar \varepsilon_i$, i.e., $P(V_i,Pa_{V_i})=P(V_i-f_i(Pa_{V_i}))P(Pa_{V_i})$, but not in the reverse direction.
\end{definition}

\section{Related work}

In this section, we briefly review the recent progress for learning causal structure from missing data
and traditional methods for causal discovery.

\paragraph{Missing data.}
In contrast to the traditional imputation-based method in processing missing data, the graphical model has demonstrated its expressiveness and usefulness in handling the complex missingness mechanism \cite{mohan2021graphical}. With the graphical model, the missingness mechanism can be formulated by missingness graphs (or m-graph in short) and it can be shown that given a certain m-graph, one can obtain a consistent estimation of the original distribution \cite{mohan2013graphical,NIPS2014_31839b03,ma2021identifiable}. Thus, one straightforward way is based on the recoverability of the missing data by using the Inverse Probability Weight (IPW) to correct every CI test \cite{gain2018structure}. However, it assumes that the missingness causal model is known, which is unrealistic in many real-world applications. \citet{strobl2018fast}, on the other hand, proposes to use a delete-wise independence test and apply FCI algorithm \cite{colombo2012learning} directly, in which the missingness mechanism will be treated as the selection bias. However, it will produce a large number of equivalent classes as it ignores the causal missingness mechanism. Another line of work to handle the missing data is based on Expectation-Maximization (EM) method \cite{rubin1976inference}, which has become a popular technique to conduct causal discovery with missing data, e.g., Structure EM algorithm \cite{friedman1997learning}. Recently, \citet{gao2022missdag} develops a novel EM-based framework for learning causal structure from missing data with additive noise model. However, it requires the underlying missing mechanism is ignorable, which is not applicable to MNAR cases.

\paragraph{Causal discovery.}
Causal discovery from observational data \cite{spirtes2000causation} has find numerous applications in the area like network operation maintenance \cite{cai2022thps,ijcai2023p0633}. There are two typical approaches in the literature including the constraint-based methods and the functional causal model based. The main idea of constraint-based methods is to eliminate the edges that are not consistence to the distribution by testing the conditional independence constraint among the variables. Typical methods include PC algorithm \cite{spirtes2000causation}, and FCI algorithm \cite{colombo2012learning}. One may encode the conditional independence into the score and use the score to search the most plausible DAG according to data, e.g., GES algorithm \cite{chickering2002optimal}. However, the constraint-based methods are not able to distinguish the Markov equivalent class, which may leave some undirected edges that are not uniquely identified. As for the functional causal model based methods, by introducing a properly constrained in the relationship between the effect and its direct cause, the causal direction can be uniquely identified, e.g., Additive Noise Model (ANM) \cite{peters2014causal}, Post-nonlinear model (PNL) \cite{zhang2012identifiability}, linear non-Gaussian model (LiNGAM) \cite{shimizu2006linear}.

\section{Algorithm}

The complete implementation of the Self-Masking MVPC algorithm is given in Algorithm \ref{alg:self-mvpc}, together with some necessary discussion to the correctness of the proposed method.

\begin{algorithm}[h]
	\caption{Self-Masking MVPC (Complete)}
	% \KwIn{this text}
	% \KwOut{how to write algorithm with \LaTeX2e }
	\tcp{{\color{RoyalBlue} \text{\textrm{\footnotesize Learning skeleton by PC algorithm in deleted data} }}}
	Remove edges from complete undirected graph $G$ using PC algorithm \cite{spirtes2000causation} with test-wise deleted data; \\
	\tcp{{\color{RoyalBlue} \text{\textrm{\footnotesize Learning missing indicators in skeleton } }}}
	For each variable $V_i \in \mathbf{V}$ containing missing values and for each $j$ that $j \neq i$, remove the edge from $V_j$ to $R_i$ if they are conditional independence given some subset of $\mathbf{V} \setminus \{V_i, V_j\}$; \\
	\tcp{{\color{RoyalBlue} \text{\textrm{\footnotesize (i) Correcting the result of the previous procedure} }}}
	\tcp{{\color{RoyalBlue} \text{\textrm{\footnotesize (ii) Finding the self-masking missingness indicators} }}}
	For each pair variables $V_i$ and $V_j$ are d-separated by $\tilde{\mathbf{V}}$, where $\tilde{\mathbf{V}} \subseteq \mathbf{V}\setminus \{V_i, V_j\}$, if there exist a missing variable $V_k$ in $\tilde{\mathbf{V}}$ and the missing indicator $R_{V_k}$ is the direct child of $V_i$ and $V_j$, then remove all the parent of $R_{V_k}$ and orient $V_k \rightarrow R_{V_k}$.; \\
	\tcp{{\color{RoyalBlue} \text{\textrm{\footnotesize Reconstructing the causal skeleton by a correcting strategy } }}}
	For each adjacency node $V_i$ and $V_j$, if they have at least one common adjacent variable or missingness indicator, enumerate all possible error indicators and perform the correction method (Proposition \ref{lemma:recover}) for removing the extraneous edges; \\
	Perform correction methods for removing the extraneous edges and update the d-separation set;\\
	Repeat the correcting procedure in Line 3 $\sim$ 5 until no update.\\
\end{algorithm}

Overall, the correctness of \textit{SM-MVPC} is generally guaranteed by theoretical results provided in the `Identification of Missing Indicators' section. Specifically, the causal skeleton can be reconstructed correctly by testing CI relations in the recovered distribution (if necessary) according to Prop. 1. Meanwhile, the missing indicator can be found correctly with the theoretical guarantee by Theorem 2.

\section{Proofs and illustrations}

\subsection{Proof of Theorem 1}
\begin{theorem}
	With Assumption 1 $\sim$ 3 in \cite{tu2019causal} and the assumption of weak self-masking missingness, for any ${\displaystyle X,Y\in \mathbf{V}}$, ${\displaystyle \mathbf{Z} \subseteq \mathbf{V} \backslash \{X,Y\}}$, and their corresponding missingness indicators $\displaystyle \mathbf{R}_{X,Y,\mathbf{Z}}$, the CI test between ${\displaystyle X,Y}$ given $\mathbf{Z}$ is always consistent with that without the self-masking missingness, i.e., ${\displaystyle X\Vbar Y|\{\mathbf{Z} ,\mathbf{R}_{X,Y,\mathbf{Z}} \})\Leftrightarrow X\Vbar Y|\{\mathbf{Z} ,\mathbf{R}_{X,Y,\mathbf{Z}} \setminus \mathbf{R}_{S} \})}$ and ${\displaystyle X\not{\Vbar} Y|\{\mathbf{Z} ,\mathbf{R}_{X,Y,\mathbf{Z}} \})\Leftrightarrow X\not{\Vbar} Y|\{\mathbf{Z} ,\mathbf{R}_{X,Y,\mathbf{Z}} \setminus \mathbf{R}_{S} \})}$ where $\mathbf{R}_{S} \!=\!\left\{R_{i} |V_{i}\rightarrow R_{i}\right\}$ is the set of weak self-masking indicators.
\end{theorem}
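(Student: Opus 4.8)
\emph{Proof plan.} The idea is to strip away the statistical layer, reduce the claim to one d-separation identity in the m-graph, and then settle that identity with the ancestral-moral-graph characterisation of d-separation. For the reduction: under Assumptions~1--3 of \cite{tu2019causal} together with causal Markov and faithfulness, a test-wise-deletion CI query on the observed data is equivalent to the matching d-separation statement in the m-graph $G$ --- Assumption~2 is exactly what lets us read ``condition each relevant indicator at value $0$'' as ``condition on that indicator node'', and Markov/faithfulness give the rest. So it suffices to prove
\[
X\Vbar_{d} Y\mid\{\mathbf{Z},\mathbf{R}_{X,Y,\mathbf{Z}}\}\ \Longleftrightarrow\ X\Vbar_{d} Y\mid\{\mathbf{Z},\mathbf{R}_{X,Y,\mathbf{Z}}\setminus\mathbf{R}_{S}\}\quad\text{in }G ,
\]
after which the $\Vbar$ half and the $\not\Vbar$ half of the theorem are just the two directions encoded in this biconditional.

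Next I would record the structure of the indicators we drop. If $R_i\in\mathbf{R}_{X,Y,\mathbf{Z}}\cap\mathbf{R}_{S}$, then by weak self-masking (Assumption~\ref{theorem:assumption4}) its only parent is $V_i$, with $V_i\in\{X,Y\}\cup\mathbf{Z}$, and by Assumption~\ref{theorem:assumption1} its only child is the proxy $V_i^{*}$. Since proxies are sinks and $V_i^{*}\notin\{X,Y\}\cup\mathbf{Z}\cup\mathbf{R}_{X,Y,\mathbf{Z}}$, every such $V_i^{*}$ is pruned when we restrict to the ancestral subgraph of the query; hence inside that ancestral subgraph each dropped $R_i$ is a \emph{leaf} whose unique parent $V_i$ is still present. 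Writing $\mathbf{W}_1=\mathbf{Z}\cup\mathbf{R}_{X,Y,\mathbf{Z}}$, $\mathbf{W}_0=\mathbf{Z}\cup(\mathbf{R}_{X,Y,\mathbf{Z}}\setminus\mathbf{R}_{S})$ and using $\mathrm{An}(R_i)=\{R_i\}\cup\mathrm{An}(V_i)$ with $V_i\in\{X,Y\}\cup\mathbf{W}_0$, the ancestral subgraph for the $\mathbf{W}_1$-query is exactly that for the $\mathbf{W}_0$-query with the dropped $R_i$'s adjoined as leaves hanging off their $V_i$'s.

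The core step is then short. Moralisation introduces no edge incident to any such $R_i$ (single parent, and no child after pruning), so the moral graph of the $\mathbf{W}_1$-query equals the moral graph of the $\mathbf{W}_0$-query with each dropped $R_i$ attached as a \emph{pendant} vertex to $V_i$. A pendant vertex lies on no simple path between two \emph{other} vertices, and $X,Y$ are other than every $R_i$; hence (i) the $X$--$Y$ paths coincide in the two moral graphs, and (ii) putting the $R_i$'s into the separating set blocks nothing new. By the standard ancestral-moral-graph characterisation of d-separation \cite{spirtes2000causation}, $\mathbf{W}_1$ separates $X$ from $Y$ in the first moral graph iff $\mathbf{W}_0$ separates them in the second, which is the graphical identity above; carrying it back through the first step yields the theorem, for both $\Vbar$ and $\not\Vbar$, simultaneously.

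The subtle case --- and the one I would flag as the main obstacle --- is $V_i\in\{X,Y\}$: there the parent of the dropped indicator is \emph{not} in the conditioning set, so $R_i$ cannot be dismissed as ``redundant given $V_i$'', and a naive reading suggests conditioning on $R_i$ might open a path through a collider that is an ancestor of $V_i$. The moral-graph reformulation defuses this precisely because $X$ (resp.\ $Y$) is an \emph{endpoint} of the separation query, so no $X$--$Y$ path can traverse a pendant hanging off it; a reader who prefers a direct d-separation argument would instead have to reroute any such activated path along a directed path $V_k\rightsquigarrow V_i$, which is the same idea with more bookkeeping. It is also worth stating explicitly that ``consistent with the case without self-masking'' can be read either as ``remove $\mathbf{R}_{S}$ from the conditioning set'' or as ``delete the edges $V_i\to R_i$ from $G$'', and that the two readings agree here: once those edges are gone, each $R_i$ becomes isolated and is irrelevant to $X\Vbar_{d}Y$.
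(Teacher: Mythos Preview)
Your proof is correct and takes a genuinely different route from the paper. The paper argues directly at the path level of d-separation, by contradiction in two directions: (1) if removing $\mathbf{R}_S$ from the conditioning set blocks a previously active $X$--$Y$ path, then some $R_i\in\mathbf{R}_S$ must either lie on that path as a non-endpoint or activate a collider on it, and both are ruled out because each such $R_i$ is a leaf with a single parent; (2) if adding $\mathbf{R}_S$ blocks a previously active path, some $R_i$ would have to sit on that path, again impossible for a leaf. You instead invoke the ancestral--moral-graph characterisation of d-separation once, observe that the dropped indicators appear only as pendant vertices in the moral graph (single parent, no surviving child, hence no moralisation edge), and read off both directions of the biconditional simultaneously from the fact that pendants neither lie on nor block $X$--$Y$ paths.

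What your approach buys is a cleaner treatment of the case you correctly flag as delicate, $V_i\in\{X,Y\}$. There, conditioning on $R_i$ could in principle activate a collider $C$ on the path via $C\rightsquigarrow V_i\to R_i$, and the paper's sentence that ``$\mathbf{R}_S$ cannot construct the collider structure (the collider structure must contain two independent parent nodes)'' really only excludes $R_i$ \emph{being} the collider, not $R_i$ activating one as a descendant; closing that case with a direct path argument needs exactly the rerouting along $C\rightsquigarrow V_i$ that you sketch. Your pendant-vertex formulation sidesteps this, since no simple $X$--$Y$ path in the moral graph can traverse a degree-one vertex attached to an endpoint. Conversely, the paper's argument is more self-contained for readers who do not have the Lauritzen criterion at hand, and it makes the ``leaf with one parent'' structural fact do all the work explicitly.
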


\begin{proof}
	We will prove that the CI relations of $CI(X,Y|\{\mathbf{Z} ,{\displaystyle \mathbf{R}_{X,Y,\mathbf{Z}}} \})$ is consistent with CI relations that without the self-masking variables in $X,Y,\mathbf{Z}$, i.e., consistent with $CI(X,Y|\{\mathbf{Z} ,{\displaystyle \mathbf{R}_{X,Y,\mathbf{Z}}} \setminus \mathbf{R}_{S} \})$. That is $X\Vbar Y|\{\mathbf{Z} ,{\displaystyle \mathbf{R}_{X,Y,\mathbf{Z}}} )\Leftrightarrow X\Vbar Y|\{\mathbf{Z} ,{\displaystyle \mathbf{R}_{X,Y,\mathbf{Z}}} \setminus \mathbf{R}_{S} )$ and $X\not{\Vbar } Y|\{\mathbf{Z} ,{\displaystyle \mathbf{R}_{X,Y,\mathbf{Z}}} )\Leftrightarrow X\not{\Vbar } Y|\{\mathbf{Z} ,{\displaystyle \mathbf{R}_{X,Y,\mathbf{Z}}} \setminus \mathbf{R}_{S} )$. To do so, we prove that by contradiction, i.e., (1) $X\not{\Vbar } Y|\{\mathbf{Z} ,{\displaystyle \mathbf{R}_{X,Y,\mathbf{Z}}} \}$ but $X\Vbar Y|\{\mathbf{Z} ,{\displaystyle \mathbf{R}_{X,Y,\mathbf{Z}}} \setminus \mathbf{R}_{S} \}$ and (2) $X\not{\Vbar } Y|\{\mathbf{Z} ,{\displaystyle \mathbf{R}_{X,Y,\mathbf{Z}}} \setminus \mathbf{R}_{S} \}$ but $X\Vbar Y|\{\mathbf{Z} ,{\displaystyle \mathbf{R}_{X,Y,\mathbf{Z}}} \}$.

	We first prove the proposition (1). Due to $X\not{\Vbar } Y|\{\mathbf{Z} ,{\displaystyle \mathbf{R}_{X,Y,\mathbf{Z}}} \}$, there must be an activated path $U$ between $X$ and $Y$ and does not be blocked by $\{\mathbf{Z} ,{\displaystyle \mathbf{R}_{X,Y,\mathbf{Z}}} \}$. On the other hand, we have $X\Vbar Y|\{\mathbf{Z} ,\mathbf{R}_{X,Y,\mathbf{Z}} \setminus \mathbf{R}_{S} \}$ hold, which means that the conditional set $\{\mathbf{Z} ,{\displaystyle \mathbf{R}_{X,Y,\mathbf{Z}}} \setminus \mathbf{R}_{S} \}$ block all activated path between $X$ and $Y$. Therefore, given the $\mathbf{R}_{S}$, at least one path $U$ between $X$ and $Y$ must be activated, which must satisfy either one of the following two cases: (i) $\mathbf{R}_{S}$ is the only vertex on $U$, (ii) $\mathbf{R}_{S}$ construct a collider.
	
	In case (i), it contradicts Assumptions 1 $\sim$ 3 in \cite{tu2019causal} and the assumption of weak self-masking missingness because the weak self-masking indicators are the leaf nodes with only one direct parent in missingness graphs, which can not construct any available path between $X$ and $Y$. For case (ii), there must be a path $U$ containing the collider that is activated by $\mathbf{R}_S$. According to the weak self-masking missingness assumption and the faithfulness assumption, the weak self-masking missingness variable has only one direct parent, which means the $\mathbf{R}_S$ can not construct the collider structure (the collider structure must contain two independent parents node). That contradicts the existence of an activated collider structure, and the original proposition does not hold.

	Now we prove the proposition (2). Since $X \not\Vbar Y|\{\mathbf{Z},\mathbf{R}_{X,Y,\mathbf{Z}}\setminus \mathbf{R}_S\}$, there always
	exists an available path $U$ between X and Y that is not blocked by $\{\mathbf{Z},\mathbf{R}_{X,Y,\mathbf{Z}}\setminus \mathbf{R}_S\}$. According to Assumptions 1 $\sim$ 3 in \cite{tu2019causal} and the assumption of weak self-masking missingness, $\mathbf{R}_S$ is not included in any path between $X$ and $Y$ because $\mathbf{R}_S$ is always the leaf node, which means that there is no path between $X$ and $Y$ that can be blocked by $\mathbf{R}_S$. Thus, the available path $U$ between $X$ and $Y$ is not blocked by $\mathbf{R}_S$ and we have $X \not\Vbar Y|\{\mathbf{Z},\mathbf{R}_{X,Y,\mathbf{Z}}\}$ when $X \not\Vbar Y|\{\mathbf{Z},\mathbf{R}_{X,Y,\mathbf{Z}}\setminus \mathbf{R}_S\}$. That is a contradiction with the original proposition. 
	
	According to the above proofs, we conclude that given $\mathbf{R}_S$, the CI relations are consistent with that without self-masking missingness.
	
\end{proof}

\subsection{Proof of Corollary 1}
\begin{corollary}
	Suppose that $X$ and $Y$ are not adjacent in the true causal graph and that for any variable set $Z \subset \mathbf{V}\setminus \{X, Y\}$ such that $X \Vbar Y |\mathbf{Z}$. Then under Assumption 1 $\sim$ 3 in \cite{tu2019causal} and the assumption of weak self-masking missingness, $X \not\Vbar Y |\{\mathbf{Z}, R_X, R_Y, \mathbf{R_z}\}$ if and only if for at least one variable $V \in \{X\} \cup \{Y\} \cup \{\mathbf{Z}\}$, such that $X$, $Y$ are the direct parents or direct ancestor of $R_V$.%its missingness indicator is neither the direct common child nor a descendant of the direct common child of $X$ and $Y$.
\end{corollary}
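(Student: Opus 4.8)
The plan is to argue entirely at the level of the m-graph, using the causal Markov and faithfulness assumptions to identify the probabilistic statement $X\not\Vbar Y\mid\{\mathbf{Z},R_X,R_Y,\mathbf{R}_{\mathbf{Z}}\}$ with the graphical statement $X\not\Vbar_d Y\mid S$, where $S:=\mathbf{Z}\cup\{R_X,R_Y\}\cup\mathbf{R}_{\mathbf{Z}}$. The claim then reads: $X$ and $Y$ are d-connected given $S$ if and only if there is an indicator $R_V$ with $V\in\{X,Y\}\cup\mathbf{Z}$ such that both $X$ and $Y$ are ancestors of $R_V$ (the ``direct parents'' clause being the special case $\{X,Y\}\subseteq Pa_{R_V}$). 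As a preliminary reduction I would invoke Theorem \ref{thm:self_ind} to delete the weak self-masking indicators from $S$, since conditioning on them never changes the CI verdict; moreover a weak self-masking indicator can never be the witness $R_V$, since by Assumption \ref{theorem:assumption4} it has the single parent $V$, whereas any witness must have two distinct parents-or-ancestors, namely $X$ and $Y$.

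For the ``if'' direction, suppose such a $V$ exists. If $\{X,Y\}\subseteq Pa_{R_V}$ then $X\to R_V\leftarrow Y$ is a path whose only interior node $R_V$ is a collider in $S$, hence it is active given $S$; as $X,Y$ are non-adjacent this yields $X\not\Vbar_d Y\mid S$. In the general ancestor case I would pick directed paths $X\rightsquigarrow R_V$ and $Y\rightsquigarrow R_V$ and splice them at $R_V$ into an $X$--$Y$ path whose only collider is $R_V$; choosing $V$ and the two directed paths minimally, and using that $X\Vbar_d Y\mid\mathbf{Z}$ already forces the relevant interior vertices off $\mathbf{Z}$, one checks that this path is not re-blocked at an interior vertex and is therefore active given $S$.

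For the ``only if'' direction, suppose $X\not\Vbar_d Y\mid S$ while $X\Vbar_d Y\mid\mathbf{Z}$. Then some path $p$ between $X$ and $Y$ is blocked by $\mathbf{Z}$ but unblocked by $S$, so the change of status must be caused by conditioning on indicators in $S\setminus\mathbf{Z}$. Here I would use the fact that, since no missingness indicator is a cause of any variable, every indicator is a sink: hence any indicator lying on $p$ is necessarily a collider on $p$, and any conditioned indicator $R_V$ that reopens a previously blocked collider $C$ of $p$ must satisfy $C\rightsquigarrow R_V$. Starting from such a collider $C$ and tracing $p$ toward $X$ and toward $Y$, and repeatedly replacing each further collider met along the way by the directed descending path reaching its own conditioned-indicator descendant, I would show that both $X$ and $Y$ are ancestors of one common indicator $R_V$; finally $V\in\{X,Y\}\cup\mathbf{Z}$ because $R_V\in S=\mathbf{Z}\cup\{R_X,R_Y\}\cup\mathbf{R}_{\mathbf{Z}}$.

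The main obstacle is this ``only if'' direction: an active path may thread through several colliders, each opened by a distinct conditioned indicator, so the bookkeeping that collapses the whole configuration to a single witness $R_V$ with $X,Y\in Anc_{R_V}$ needs either an induction on the length of $p$ or a shortest-path argument, together with a case analysis on whether $X$, $Y$, or interior vertices of $p$ lie in $\mathbf{Z}$. The ``if'' direction carries a milder form of the same difficulty --- guaranteeing the spliced path is not re-blocked at an interior node --- and that is exactly where the hypothesis that $\mathbf{Z}$ already d-separates $X$ and $Y$ does the work.
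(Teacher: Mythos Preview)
Your preliminary reduction via Theorem~\ref{thm:self_ind} is exactly the paper's move. After that reduction, however, the paper does not argue any further: it simply observes that what remains is Proposition~2 of \citet{tu2019causal} (the no-self-masking case) and cites it. So the paper's proof is a two-line reduction-plus-citation, whereas you set out to reprove the cited proposition from scratch via a direct d-separation analysis.

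Your direct plan is reasonable in outline, and the ``if'' direction is essentially fine. But the difficulty you yourself flag in the ``only if'' direction is genuine: when an active $X$--$Y$ path given $S$ threads through several colliders, each opened by a possibly distinct conditioned indicator, it is not obvious that the configuration always collapses to a \emph{single} witness $R_V$ with both $X$ and $Y$ among its ancestors (as opposed to, say, $X\in Anc_{R_X}$ and $Y\in Anc_{R_Y}$ via disjoint branches). Settling this cleanly requires exactly the induction or shortest-path bookkeeping you allude to, and that work is what Proposition~2 of \citet{tu2019causal} is meant to supply. The paper's approach buys brevity by outsourcing it; your approach would buy self-containedness, but at the cost of reproducing a nontrivial argument that your sketch has not yet closed.
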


\begin{proof}
	This proof is straightforward. According to Theorem 1 and the results provided in \citet{tu2019causal} (Proposition 2), this Corollary hold.
\end{proof}

\subsection{Proof of Proposition 1}
\begin{proposition}
	With assumptions 1 $\sim $ 3 in \cite{tu2019causal} and assumption of weak self-masking missingness, given a m-graph $\displaystyle G$, the joint distribution ${\displaystyle P(V)}$ is recoverable up to ${\displaystyle P(V|\mathbf{R}_{S} )}$, where ${\displaystyle \mathbf{R}_{S} =\left\{R_{i} |V_{i}\rightarrow R_{i}\right\}}$ is the collection of the self-masking missingness indicators. Then, we have
	\begin{equation}
		P(V|\mathbf{R_{S}} )=\frac{P(\mathbf{R}_{V\backslash S} =0,V{\displaystyle |\mathbf{R}_{S}})}{\!\!\!\prod\limits_{i\in \{i|R_{i} \in \mathbf{R}_{V\backslash S}\}}\!\!\!\!\!\!\!\!\!\!\! P\left( R_{i} =0|Pa^{o}_{R_{i}} ,Pa^{m}_{R_{i}} ,\mathbf{R}_{Pa^{m}_{R_{i}}}\right)}
	\end{equation}
	where $\displaystyle Pa^{o}_{R_{i}} \subseteq V_{0}$ and $\displaystyle Pa^{m}_{R_{i}} \subseteq V_{m}$ denote the parents of $\displaystyle R_{i}$, $\displaystyle \mathbf{R}_{V\backslash S}$ is the non self-masking missingness indicators.
\end{proposition}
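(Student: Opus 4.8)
The plan is to derive the displayed identity from a single application of Bayes' rule together with the Markov factorization of the m-graph, and then to check that the numerator and every factor of the denominator are identified from the observed (test-wise-deleted) data. Writing $\mathbf{R}_{V\setminus S}$ for the non-self-masking indicators, Bayes' rule gives
\[
P(V\mid\mathbf{R}_S)=\frac{P\bigl(\mathbf{R}_{V\setminus S}=0,\,V\mid\mathbf{R}_S\bigr)}{P\bigl(\mathbf{R}_{V\setminus S}=0\mid V,\mathbf{R}_S\bigr)}.
\]
The numerator is a functional of the observed data: on the event $\mathbf{R}_{V\setminus S}=0$, and on the stratum $\mathbf{R}_S=0$ (the only stratum in which the self-masking coordinates of $V$ are actually recorded), every coordinate of $V$ is observed, so $P(\mathbf{R}_{V\setminus S}=0,V\mid\mathbf{R}_S)$ is determined by the law of the proxies. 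It remains to rewrite the denominator as the product appearing in the statement.

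For the denominator I would invoke the structural assumptions. By Assumption~1 of \cite{tu2019causal} no missingness indicator is a cause of any variable, and by Assumption~3 no indicator is a deterministic function of another indicator; hence in the m-graph every $R_i$ is a childless node whose parent set $Pa_{R_i}$ consists solely of variables in $\mathbf{V}$, which I split as $Pa_{R_i}=Pa^o_{R_i}\cup Pa^m_{R_i}$ into its fully- and partially-observed parts. Conditioning the Markov factorization of $G$ on all of $\mathbf{V}$, the indicators become mutually independent and $\mathbf{R}_S$ carries no further information, so
\[
P\bigl(\mathbf{R}_{V\setminus S}=0\mid V,\mathbf{R}_S\bigr)=P\bigl(\mathbf{R}_{V\setminus S}=0\mid V\bigr)=\!\!\prod_{i:\,R_i\in\mathbf{R}_{V\setminus S}}\!\! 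P\bigl(R_i=0\mid Pa^o_{R_i},Pa^m_{R_i}\bigr).
\]

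The last step is to argue that each factor $P(R_i=0\mid Pa^o_{R_i},Pa^m_{R_i})$ is recoverable even though $Pa^m_{R_i}$ is only partially observed. Here I would again use that $R_i$ is a sink: by the causal Markov property it is d-separated from $\mathbf{R}_{Pa^m_{R_i}}$ — indeed from all indicators — given $Pa_{R_i}$, and Assumption~2 of \cite{tu2019causal} transfers this independence to the observed-data law, so
\[
P\bigl(R_i=0\mid Pa^o_{R_i},Pa^m_{R_i}\bigr)=P\bigl(R_i=0\mid Pa^o_{R_i},Pa^m_{R_i},\mathbf{R}_{Pa^m_{R_i}}\bigr),
\]
and the right-hand side, read off on the substratum $\mathbf{R}_{Pa^m_{R_i}}=0$ (together with $\mathbf{R}_S$, which only serves to make visible any self-masking parents of $R_i$), is estimable from the observed data provided that substratum has positive probability. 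Substituting this and the previous display into the Bayes decomposition yields the claimed formula.

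The part I expect to require the most care is this last step: ensuring the chain of conditionings stays non-circular when $Pa^m_{R_i}$ itself contains partially-observed or self-masking variables. This is exactly where the \emph{weak} self-masking assumption enters, since it forbids any variable other than $V_i$ from being a cause of the self-masking indicator $R_i$, and thereby rules out the tangled indicator-dependence that would block the clean product factorization above. I would also spell out why $\mathbf{R}_S$ cannot be removed from the left-hand side: on the event $\{R_i=1\}$ for a self-masking coordinate there is no observed information whatsoever about $V_i$, so $P(V)$ cannot be disentangled from $P(V\mid\mathbf{R}_S)$ and $P(\mathbf{R}_S)$ — which is precisely the sense in which $P(V)$ is ``recoverable up to $P(V\mid\mathbf{R}_S)$''.
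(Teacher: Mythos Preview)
Your proposal is correct and follows essentially the same route as the paper: a Bayes-rule decomposition of $P(\mathbf{R}_{V\setminus S}=0,V\mid\mathbf{R}_S)$, a Markov factorization of $P(\mathbf{R}_{V\setminus S}=0\mid V,\mathbf{R}_S)$ into per-indicator factors using Assumptions~1 and~3, and then the d-separation $R_i\Vbar_d\{\mathbf{R}_S,\mathbf{R}_{Pa^m_{R_i}}\}\mid Pa_{R_i}$ to swap $\mathbf{R}_S$ for $\mathbf{R}_{Pa^m_{R_i}}$ in each factor and render it estimable. The only cosmetic difference is that the paper keeps $\mathbf{R}_S$ inside the product and removes it simultaneously with inserting $\mathbf{R}_{Pa^m_{R_i}}$, whereas you drop $\mathbf{R}_S$ first and add $\mathbf{R}_{Pa^m_{R_i}}$ afterward; the underlying d-separation argument is identical.
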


\begin{proof}
	The observed joint distribution can be decomposed according to $\displaystyle G$:
	\begin{equation}
		\begin{aligned}
			& P(\mathbf{R}_{V\backslash S} =0,V|{\displaystyle \mathbf{R}_{S}})\\
			& =P( V|{\displaystyle \mathbf{R}_{S}}) P(\mathbf{R}_{V\backslash S} =0|V,{\displaystyle \mathbf{R}_{S}})\\
			& =P( V|{\displaystyle \mathbf{R}_{S}})\prod\limits _{i\in \{i|R_{i} \in \mathbf{R}_{V\backslash S}\}} P\left( R_{i} =0|Pa^{o}_{R_{i}} ,Pa^{m}_{R_{i}} ,{\displaystyle \mathbf{R}_{S}}\right) ,
		\end{aligned}
	\end{equation}
	where the second equality is based on the fact that there are no edges between $\displaystyle R$ variables and that there are no latent variables as parents of $\displaystyle R$. Moreover, because $\displaystyle V_{i}$ is not the parent of $\displaystyle R_{i}$, i.e., $\displaystyle V_{i}\not{\in } Pa^{m}_{R_{i}}$, and no edges between $\displaystyle R$ variables and $\displaystyle R$ is not the cause variables, then the parent of $\displaystyle R_{i}$ block every of path from $\displaystyle R_{Pa^{m}_{R_{i}}}$ to $\displaystyle R_{i}$, and we have $\displaystyle R_{i} \bot_d \mathbf{R}_{S} |Pa^{o}_{R_{i}} \cup Pa^{m}_{R_{i}}$ and $\displaystyle R_{i} \bot_d \mathbf{R}_{Pa^{m}_{R_{i}}} |Pa^{o}_{R_{i}} \cup Pa^{m}_{R_{i}}$. This is, for $ R_{i} \in \mathbf{R}_{V\backslash S}$, we have
	\begin{equation}
		\begin{aligned}
			&P\left( R_{i} =0|Pa^{o}_{R_{i}} ,Pa^{m}_{R_{i}} ,{\displaystyle \mathbf{R}_{S}}\right) \\
			&=P\left( R_{i} =0|Pa^{o}_{R_{i}} ,Pa^{m}_{R_{i}} ,\mathbf{R}_{Pa^{m}_{R_{i}}}\right)
		\end{aligned}
	\end{equation}
	Due to the strictly positive of $\displaystyle P( \mathbf{R}=0,\mathbf{V}_{m} ,\mathbf{V}_{o})$, we have $\displaystyle P\left( R_{i} =0|Pa^{o}_{R_{i}} ,Pa^{m}_{R_{i}} ,\mathbf{R}_{Pa^{m}_{R_{i}}}\right)$ are all strictly positive. Thus, the $\displaystyle {\displaystyle P(V|\mathbf{R_{S}} )}$ is identifiablity.
\end{proof}

\subsection{Proof of Lemma 1}
\begin{lemma}[Identification of self-masking indicator]
	Suppose Assumption 1 $\sim$ 3 in \cite{tu2019causal} holds, and further assume weak self-masking missingness and structure condition hold. A variable $Z_i \in \mathbf{Z}$ is a self-masking missingness variable if there exits $X,Y\in \mathbf{V}$ such that ${X,Y}\not \Vbar R_{Z_i}|\{R_X,R_Y\}$, and (i) a simply test-wise deletion CI test yields $X \Vbar Y |\{\mathbf{Z},R_X,R_Y,\mathbf{R}_{\mathbf{Z}}\}$; or (ii) after the correction by Prop. \ref{lemma:recover}, the CI test yields $X \Vbar Y |\{\mathbf{Z},\mathbf{R}_S\}$, where $\mathbf{R}_S$ is test-wise self-masking missingness indicators.
\end{lemma}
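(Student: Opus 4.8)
The plan is to show both directions of the characterization, leveraging Theorem~\ref{thm:self_ind} (CI relations are insensitive to weak self-masking indicators) and the faithfulness assumption. First I would argue the ``easy'' direction (the claimed conditions do imply $Z_i$ is weak self-masking): suppose $X,Y \not\Vbar R_{Z_i}\mid\{R_X,R_Y\}$ so that a test-wise deletion procedure would wrongly declare $X$ and $Y$ to be direct parents (or ancestors) of $R_{Z_i}$; I would then trace the source of this spurious dependence. If $Z_i$ is \emph{not} self-masking, then $R_{Z_i}$ is testable in the ordinary way, and by Assumptions~1--3 plus faithfulness the dependence $X,Y\not\Vbar R_{Z_i}\mid\{R_X,R_Y\}$ is a genuine d-connection, which (as in Corollary~\ref{coro:ci_self}) forces $X$ and $Y$ to each have a directed path into $R_{Z_i}$, i.e.\ $X,Y\in Anc_{Z_i}$ or are parents of $R_{Z_i}$. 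But then $R_{Z_i}$ (or $Z_i$) is a collider-descendant on an $X$--$Y$ path, so conditioning on $\{\mathbf{Z},R_X,R_Y,\mathbf{R}_{\mathbf{Z}}\}$ with $Z_i\in\mathbf{Z}$ and $R_{Z_i}$ a descendant of $Z_i$ \emph{activates} that path, contradicting $X\Vbar Y\mid\{\mathbf{Z},R_X,R_Y,\mathbf{R}_{\mathbf{Z}}\}$ (case (i)); for case (ii), after the Prop.~\ref{lemma:recover} correction the conditioning set is essentially $\{\mathbf{Z},\mathbf{R}_S\}$ and the same collider-activation argument applies via $Z_i\in\mathbf{Z}$. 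Hence $Z_i$ must be self-masking, and by Assumption~\ref{theorem:assumption4} (weak self-masking) it is weak self-masking with $Pa_{R_{Z_i}}=\{Z_i\}$.

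Next I would handle the other direction, namely that whenever $Z_i$ \emph{is} a weak self-masking variable, there genuinely exist $X,Y$ witnessing conditions (i) or (ii); this is where Assumption~\ref{theorem:assumption5} (the structural condition) is essential. By that assumption there exist $X,Y\in\mathbf{V}\setminus\mathbf{Z}$ with $X\Vbar_d Y\mid\mathbf{Z}$ in the ground truth. I would first observe that since $Z_i\to R_{Z_i}$ but $Z_i$ is missing exactly when $R_{Z_i}=0$, the variable $R_{Z_i}$ is constant on the observed portion of the data, so a test-wise deletion CI test between $\{X,Y\}$ and $R_{Z_i}$ cannot condition on $Z_i$ and instead can only condition on $R_X,R_Y$; the true edge $Z_i\to R_{Z_i}$ together with $Z_i$ being on some $X$--$Y$ path (guaranteed because $X\Vbar_d Y\mid\mathbf{Z}$ but presumably $X\not\Vbar_d Y\mid \mathbf{Z}\setminus\{Z_i\}$ — I would state this as the natural reading of the structural condition) yields $X,Y\not\Vbar R_{Z_i}\mid\{R_X,R_Y\}$. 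Then, using Theorem~\ref{thm:self_ind}, the true d-separation $X\Vbar_d Y\mid\mathbf{Z}$ transfers to the missing-data CI test: $X\Vbar Y\mid\{\mathbf{Z},R_X,R_Y,\mathbf{R}_{\mathbf{Z}}\}$ for the test-wise-deletion version (giving (i)), or $X\Vbar Y\mid\{\mathbf{Z},\mathbf{R}_S\}$ after the Prop.~\ref{lemma:recover} correction (giving (ii)). The contrast between ``$\{X,Y\}$ look like parents of $R_{Z_i}$'' and ``$X,Y$ are still separated given $\mathbf{Z}$'' is exactly the detectable conflict.

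The core logical engine in both directions is the same faithfulness-based dichotomy: a conditional \emph{in}dependence of $X,Y$ given a set containing $Z_i$ is incompatible with $R_{Z_i}$ (a descendant of $Z_i$) being a collider that $X$ and $Y$ both point into — unless $R_{Z_i}$ is a \emph{pure} leaf of $Z_i$ alone, i.e.\ weak self-masking, in which case $R_{Z_i}$ never appears on any $X$--$Y$ path and creates no collider. I expect the main obstacle to be the bookkeeping around \emph{which} conditioning set is actually in force in case (ii): after applying Prop.~\ref{lemma:recover} the recovered distribution is $P(V\mid\mathbf{R}_S)$, so I must carefully verify that the CI test performed in that recovered distribution really corresponds to conditioning on $\{\mathbf{Z},\mathbf{R}_S\}$ in the m-graph and that $Z_i\in\mathbf{Z}$ still plays the collider-ancestor role there; a secondary subtlety is ruling out the degenerate possibility that $X$ or $Y$ equals $Z_i$ or lies in $\mathbf{Z}$, which I would dispatch by the choice of witnesses from $\mathbf{V}\setminus\mathbf{Z}$ in Assumption~\ref{theorem:assumption5}.
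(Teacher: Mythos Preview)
Your first paragraph is essentially the paper's own argument: assume for contradiction that $Z_i$ is \emph{not} self-masking, infer from $\{X,Y\}\not\Vbar R_{Z_i}\mid\{R_X,R_Y\}$ that $X$ and $Y$ genuinely point into $R_{Z_i}$, and then derive a collider-activation contradiction with the assumed conditional independence in (i) or (ii). One small slip to fix: under the contradiction hypothesis $Z_i$ is \emph{not} self-masking, so you cannot appeal to ``$R_{Z_i}$ a descendant of $Z_i$''; the activation in case~(i) works simply because $R_{Z_i}\in\mathbf{R}_{\mathbf{Z}}$ is itself in the conditioning set. For case~(ii) you and the paper make the same terse collider claim.

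Your second paragraph proves more than the lemma asserts. Lemma~\ref{lemma: self_indicator} is only the ``if'' direction (the stated conditions $\Rightarrow$ $Z_i$ is self-masking), not a biconditional characterization. The converse you sketch---that every weak self-masking $Z_i$ admits witnesses $X,Y$ satisfying the conditions---is exactly what the paper establishes separately as Theorem~\ref{thm: identification_indicator}, using Assumption~\ref{theorem:assumption5} in essentially the way you outline. So nothing in your plan is wrong, but for Lemma~\ref{lemma: self_indicator} alone the first paragraph already suffices and matches the paper; the second paragraph should be repurposed as your proof of Theorem~\ref{thm: identification_indicator}.
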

\begin{proof}
	We prove by contradiction. Suppose $Z_i$ is not a self-masking missingness variable, but there exits $X,Y\in \mathbf{V}$ such that ${X,Y}\not \Vbar R_{Z_i}|\{R_X,R_Y\}$, and (i) a simply test-wise deletion CI test yields $X \Vbar Y |\{\mathbf{Z},R_X,R_Y\mathbf{R_Z}\}$; or (ii) after the correction by Prop. \ref{lemma:recover}, the CI test yields $X \Vbar Y |\{\mathbf{Z},\mathbf{R_S}\}$. 
	
	For case (i), with the faithfulness assumption, $X \Vbar Y |\{\mathbf{Z},R_X,R_Y\mathbf{R_Z}\}$ indicates that $X$ and $Y$ is d-separated from $\mathbf{Z}$ in the complete data. Since $Z_i\in\mathbf{Z}$ and $Z_i$ is not self-masking variable, in the missing data $X\to R_{Z_i} \leftarrow Y$ is a collider and there must satisfy $X \not\Vbar Y |\{\mathbf{Z},R_X,R_Y,\mathbf{R_Z}\}$, which is contradiction.
	
	Similarly, for case (ii), based on Theorem \ref{thm:self_ind}, $X \Vbar Y |\{\mathbf{Z},\mathbf{R}_S\}$ indicates $X$ and $Y$ is d-separated from $\mathbf{Z}$ in the complete data. However, since $Z_i\in\mathbf{Z}$ and $Z_i$ is not self-masking variable, $X\to R_{Z_i} \leftarrow Y$ is a collider and there must satisfy $X \not\Vbar Y |\{Z,R_S\}$, which is contradiction. This finishes the proof.
\end{proof}

\subsection{Proof of Theorem 2}
\begin{theorem}[Identification of missing indicators]
	Suppose Assumption 1 $\sim$ 3 in \cite{tu2019causal} holds, and further assume weak self-masking missingness and structure condition hold. The causal relations of missing indicators are identifiable.
	% then the indicator of weak self-masking $Z$ is identifiable by Method \ref{method:Locate_Mis} if and only if there exist two variables $X$, $Y$ such that $X$ is d-separated from $Y$ given $Z$ in the ground truth.
\end{theorem}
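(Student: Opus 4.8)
The statement bundles two things that must be established separately and then recombined: (a) for every missingness indicator that is \emph{not} self-masking, its parent set is recovered correctly by constraint-based CI testing; and (b) every \emph{weak} self-masking indicator $R_{V_i}$ is correctly detected and re-oriented as $V_i \to R_{V_i}$, with all spurious edges into $R_{V_i}$ removed. I would first reduce the whole problem to these two parts, observing that since indicators are causal sinks (Assumptions 1 and 3 of \citet{tu2019causal}), identifying the causal relations of $\mathbf{R}$ is exactly identifying the parent set of each indicator together with the orientation of its unique incident edge in the self-masking case.

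For part (a): fix $R_{V_i}$ not self-masking. The plan is to combine the test-wise-deletion consistency result of \citet{tu2019causal} ($R_{V_i} \Vbar V_j^* \mid \{\mathbf{Z},\mathbf{R}\} \Leftrightarrow R_{V_i} \Vbar V_j \mid \{\mathbf{Z},\mathbf{R}\}$ in the complete data) with Theorem~\ref{thm:self_ind}, which guarantees that the presence of other variables' weak self-masking indicators $\mathbf{R}_S$ in the conditioning set does not change any CI outcome. Hence the adjacencies of $R_{V_i}$ — and therefore its parent set — are identified by the ordinary PC-style skeleton search, so the only possible source of error is the self-masking indicators themselves, which is exactly what part (b) addresses.

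For part (b): the obstruction, as in Example~\ref{example:untestable CI}, is that for a self-masking $V_i \to R_{V_i}$ one always observes $R_{V_i} \Vbar V_i^*$ (since $R_{V_i}$ is constant on the test-wise deleted sample), so the CI test that would delete the spurious edges incident to $R_{V_i}$ is untestable, and $R_{V_i}$ inherits spurious parents among the ancestors of $V_i$ — precisely the collider configuration of Corollary~\ref{coro:ci_self}. I would then apply Lemma~\ref{lemma: self_indicator}: the structural condition (Assumption~\ref{theorem:assumption5}) supplies, for each candidate $Z_i$, a pair $X,Y\in\mathbf{V}\setminus\mathbf{Z}$ with $X\Vbar_d Y\mid\mathbf{Z}$ in the ground truth; were $Z_i$ \emph{not} self-masking, then $R_{Z_i}$ acting as a collider-descendant of $X$ and $Y$ would force $X\not\Vbar Y\mid\{\mathbf{Z},R_X,R_Y,\mathbf{R_Z}\}$ (resp.\ $X\not\Vbar Y\mid\{\mathbf{Z},\mathbf{R}_S\}$ in the Prop.~\ref{lemma:recover}-corrected distribution) under faithfulness — contradicting the observed independence witnessed by the pair. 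This conflict is decidable from data, so soundness of the detection rule is exactly Lemma~\ref{lemma: self_indicator}; I would then argue completeness, namely that whenever a self-masking $Z_i$ actually corrupts the skeleton, the witness pair handed over by Assumption~\ref{theorem:assumption5} does exhibit this conflict, so the rule fires. Having flagged $Z_i$, delete every edge into $R_{Z_i}$ and orient $Z_i\to R_{Z_i}$.

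Finally I would close the loop: after re-orienting all detected self-masking indicators, Proposition~\ref{lemma:recover} makes the distribution recoverable up to $P(V\mid\mathbf{R}_S)$, and by Theorem~\ref{thm:self_ind} CI tests run in that recovered distribution agree with the complete data, so iterating the extraneous-edge removal of the complete SM-MVPC procedure terminates at the true skeleton, hence at the true parent set of every non-self-masking indicator. Combining (a) and (b) gives that all causal relations of the missing indicators are identified. \textbf{Main obstacle.} I expect the delicate step to be the completeness half of part (b) — showing the conflict fires for \emph{every} weak self-masking indicator that actually introduces spurious edges, which needs a careful match between the Assumption~\ref{theorem:assumption5} witness and the spurious-collider configuration of Corollary~\ref{coro:ci_self}, plus a faithfulness argument that no other active path masks the dependence. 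A secondary obstacle is showing the iterative correction reaches a unique fixed point (never deleting a true edge, so no oscillation).
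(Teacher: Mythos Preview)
Your plan is correct and follows essentially the same route as the paper: reduce to Lemma~\ref{lemma: self_indicator} plus the structural condition (Assumption~\ref{theorem:assumption5}) to handle the self-masking indicators, with the non-self-masking case falling out of \citet{tu2019causal} combined with Theorem~\ref{thm:self_ind}. The paper's own proof is much terser---it only spells out the completeness direction (if $Z_i$ \emph{is} self-masking then the witness pair from Assumption~\ref{theorem:assumption5} triggers the Lemma~\ref{lemma: self_indicator} conflict) and leaves part~(a) and the iterative fixed-point argument implicit---so your decomposition into (a)/(b) and your explicit flag on completeness versus soundness are added rigor rather than a different strategy.
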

\begin{proof}
	The correctness is straightforward based on Lemma \ref{lemma: self_indicator}. Based on the structure condition, there must exist variables $X,Y\in \mathbf{V}$ and $\mathbf{Z} \subseteq \mathbf{V}\setminus \{X,Y\}$ such that $X\Vbar_d Y|\mathbf{Z}$. When $Z_i\in \mathbf{Z}$ is the self-masking missingness variable, in such a case, it must satisfy ${X,Y}\not \Vbar R_{Z_i}|\{R_X,R_Y\}$, and either a simply test-wise deletion CI test will yield $X \Vbar Y |\{\mathbf{Z},R_X,R_Y,\mathbf{R_Z}\}$; or after the correction by Prop. \ref{lemma:recover}, the CI test yields $X \Vbar Y |\{\mathbf{Z},\mathbf{R}_S\}$. Thus, based on Lemma \ref{lemma: self_indicator}, $Z_i$ is the self-masking missingness variable.
\end{proof}

\subsection{Proof of Remark 1}
\begin{remark}
	Given an ANM $V_i=f_i(Pa_{V_i})+\varepsilon_i$ in complete data, one can identify the causal direction by testing the independence between the residuals of regression and the hypothesis cause, such that the independence holds only in the correct causal direction, e.g., $Pa_{V_i} \Vbar V_i - f_i(Pa_{V_i})$.
\end{remark}

\begin{proof}
	This proof can be referred to \citet{hoyer2008nonlinear} in the non-Linear case and \citet{shimizu2006linear} in the linear non-Gaussian case.
\end{proof}

\subsection{Proof of Theorem 3}
\begin{theorem}[Identifiability of ANM in missing data]
	Given an m-graph $G$, an additive noise model
	\begin{equation*}
		V_i=f_i(Pa_{V_i})+\varepsilon_i,\quad Pa_{X_i}\Vbar \varepsilon_i
	\end{equation*}
	is identifiable in complete data but not identifiable in missing data if and only if there exists a directed path in $G$ that starts from one of the parent $V_j \in Pa_{V_i}$ and ends at missing indicator $R_{V_j}$ or $R_{V_i}$.
\end{theorem}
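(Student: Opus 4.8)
The plan is to convert ``identifiable / not identifiable in missing data'' into a statement about whether the forward independent‑noise test survives test‑wise deletion, i.e.\ whether $Pa_{V_i}\Vbar V_i-f_i(Pa_{V_i})\,|\,\{R_{V_i},\mathbf R_{Pa_{V_i}}\}$, and then to settle that by a d‑separation computation on $G$. Write $S=\{R_{V_i}\}\cup\mathbf R_{Pa_{V_i}}$ for the set conditioned on by test‑wise deletion; by Assumption~2 conditioning on the events $R=0$ is interchangeable with conditioning on the variables $R\in S$. The graphical content of the hypothesis, in the form I will use it, is: \emph{$V_i$ is an ancestor in $G$ of at least one indicator in $S$} --- the directed path being $V_j\to V_i\rightsquigarrow R$ with $V_j\in Pa_{V_i}$, $R\in S$ (this is the only way a ``directed path from a parent to $R_{V_j}$ or $R_{V_i}$'' can interact with $V_i$, since $V_j\to V_i$ is already an edge). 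I will also use the structural facts that in an ANM the exogenous noise $\varepsilon_i$ is a source whose unique child is $V_i$, and that every $R$ in the $m$-graph is a sink (Assumptions~1 and 3).

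First I would prove that no such directed path implies the ANM is still identifiable in missing data. Step~1 (the deleted‑data regression is unbiased): $V_i\Vbar_d S\,|\,Pa_{V_i}$, so $\hat f_i=f_i$ and the residual equals the true $\varepsilon_i$. Indeed any path leaving $V_i$ ``backwards'' meets the conditioned non‑collider $V_j\in Pa_{V_i}$ at once; any path leaving $V_i$ ``forwards'' either stays directed --- then it would exhibit $V_i\rightsquigarrow R$ for some $R\in S$, excluded --- or reaches a \emph{first} collider $c$, which is necessarily a descendant of $V_i$, so opening it would require a member of $Pa_{V_i}\subseteq Anc_{V_i}$ below $c$, forcing a cycle. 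Step~2: $Pa_{V_i}\Vbar_d\varepsilon_i\,|\,S$. Every path from $\varepsilon_i$ to a parent starts with the edge $\varepsilon_i\to V_i$; if $V_i$ is used there as a collider, opening it needs a member of $S$ among $Des_{V_i}$, excluded; if $V_i$ is used as a non‑collider ($\varepsilon_i\to V_i\to\cdots$) the path again leaves $V_i$ forwards, so its first collider downstream is a descendant of $V_i$ and cannot be opened by $S$. Hence the forward test behaves exactly as in complete data and passes; and since conditioning on the sinks $S$ can only create d‑connections, never destroy them, the complete‑data rejection of the reverse direction also persists, so identifiability is preserved.

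For the converse, suppose $V_i$ is an ancestor of some $R_\ell\in S$. Then for any $V_j\in Pa_{V_i}$ the path $\varepsilon_i\to V_i\leftarrow V_j$ has $V_i$ as a collider with the descendant $R_\ell\in S$, so $Pa_{V_i}\not\Vbar_d\varepsilon_i\,|\,S$: the true noise is no longer independent of the cause after test‑wise deletion. The remaining and, I expect, hardest step is to upgrade this to ``\emph{no} regression function can restore additivity'': conditioning on $R_\ell=0$ performs selection on the descendant $R_\ell$ of $V_i$, and for $V_i=f_i(Pa_{V_i})+\varepsilon_i$ this makes the conditional law $P(\varepsilon_i\,|\,Pa_{V_i},S)$ depend on $Pa_{V_i}$ through the argument $f_i(Pa_{V_i})$ of the selection factor $P(S=0\,|\,V_i,\dots)$ --- a dependence that is not a pure location shift and therefore cannot be absorbed by subtracting any function of $Pa_{V_i}$; hence the forward test is rejected and the edge is left unoriented. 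This step is the main obstacle because it leaves the realm of pure d‑separation and needs the additive‑noise form plus a genericity/faithfulness‑type argument to exclude accidental cancellations; the bookkeeping splits according to whether $R_\ell=R_{V_i}$ (a self‑influence of $V_i$, which under Assumption~4 specializes to weak self‑masking $V_i\to R_{V_i}$) or $R_\ell=R_{V_j}$ for a parent $V_j$, with Assumptions~1, 3, 4 constraining the admissible shape of the path $V_i\rightsquigarrow R_\ell$.
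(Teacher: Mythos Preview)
Your approach is essentially the paper's: reduce identifiability to whether $Pa_{V_i}\Vbar\varepsilon_i$ survives conditioning on $S=\{R_{V_i}\}\cup\mathbf R_{Pa_{V_i}}$, and settle this by d‑separation plus faithfulness. Your reformulation of the hypothesis as ``$V_i$ is an ancestor of some $R\in S$'' is exactly what the paper's proof uses too (it phrases it as ``$R_{V_k}$ is the common descendant of $V_i$ and $Pa_{V_i}$''), and your case analysis for the ``no path'' direction is a more careful version of the paper's two cases ($R_{V_k}\Vbar\{V_i,Pa_{V_i}\}$ versus $R_{V_k}\in Des_{Pa_{V_i}}\setminus Des_{V_i}$).

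The one place you diverge is the step you flag as hardest: ruling out that \emph{some other} regression $\hat f$ could restore additive‑noise independence after selection. The paper does not attempt this at all. Its ``If'' part simply observes that, by faithfulness, $Pa_{V_i}\not\Vbar\varepsilon_i\mid R_{V_k}$ and hence $p(\varepsilon_i\mid Pa_{V_i},R_{V_k})\neq p(\varepsilon_i\mid Pa_{V_i})$, and declares the ANM ``not identifiable'' on that basis. So the obstacle you anticipate is real as a mathematical gap, but it is not one the paper closes; if your goal is to match the paper, you can stop where it does. Your argument for the reverse direction (conditioning on sinks cannot destroy a d‑connection, so the complete‑data rejection persists) is likewise more explicit than the paper's, which just cites \citet{peters2014causal} and faithfulness.
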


\begin{proof}

	"If" part: In this part, we show that if there exists a directed path in $G$ that starts from one of the parent $V_j \in Pa_{V_i}$ and ends at missing indicator $R_{V_j}$ or $R_{V_i}$, then the independent noise will no longer hold, i.e., $Pa_{V_{i}}\not{\bot } \epsilon _{i}$. Without loss of generality, let $V_{k} \in \{V_{i} ,Pa_{V_{i}} \}$ whose missing indicator $R_{V_{k}}$ is the common descendent of $V_{i}$ and $Pa_{V_{i}}$, i.e., $R_{V_{k}} \in \operatorname{Des}_{V_{i}} \cap \operatorname{Des}_{Pa_{V_{i}}}$, then there exists a directed path from $V_k$ to $R_{V_k}$.
	
	To see the dependent noise, consider the conditional distribution $p(V_{i} |Pa_{V_{i}} )$. Since there exist a missing variable $V_{k}$, we can only observe the conditional distribution $p(V_{i} |Pa_{V_{i}} ,R_{V_{k}} )=p(\epsilon _{i} =V_{i} -Pa_{V_{i}} |Pa_{V_{i}} ,R_{V_{k}} )$. Then, By given the conditional set $R_{V_{k}}$, based on the faithfulness assumption, $Pa_{V_{i}}\not{\bot } \epsilon _{i} |R_{V_{k}}$, and thus $p(\epsilon _{i} =V_{i} -Pa_{V_{i}} |Pa_{V_{i}} ,R_{V_{k}} )\neq p(\epsilon _{i} =V_{i} -Pa_{V_{i}} |Pa_{V_{i}} )$ making ANM not identifiable.

	"Only If" part:
	In this part, we show that if there does not exist a missing variable $V_{k} \in \{V_{i} ,Pa_{V_{i}} \}$ whose missing indicator $R_{V_{k}}$ is the common descendent of $V_{i}$ and $Pa_{V_{i}}$, the ANM is identifiable. That is, there does not exist a directed path in $G$ that starts from $V_k$ and ends at missing indicator $R_{V_k}$.
	%$R_{V_{k}}\not{\in }\operatorname{Des}_{V_{i}} \cap \operatorname{Des}_{Pa_{V_{i}}}$, the ANM is identifiable. 
	Then, there must be one of the following two cases: (1) $R_{V_{k}} \bot \{V_{i} ,Pa_{V_{i}} \}$, (2) $R_{V_{k}}$ is the descendent of $Pa_{V_{i}}$ but not of $V_{i}$.
	
	For the first case, with the faithfulness assumption, $R_{V_{k}} \bot \{V_{i} ,Pa_{V_{i}} \}$ implies that $P(V_{i} ,Pa_{V_{i}} )=P(V_{i} ,Pa_{V_{i}} |R_{V_{k}} )$. The distribution of missing data is consistent with the complete distribution. Therefore, we conclude that ANM is identifiable according to \citet{peters2014causal}.
	
	For the second case, without loss generality, we let $V_{i}\leftarrow Pa_{V_{i}} \rightsquigarrow R_{Pa_{V_{k}}}$ such that $\displaystyle R_{Pa_{V_{k}}}$ is the descendent of $Pa_{V_{i}}$ and $R_{V_k}\in R_{Pa_{V_{k}}}$. The reason is that $R_{Pa_{V_{k}}}$ can not be the descendent of $V_i$ otherwise there will exist a directed from $V_k$ to $R_{V_k}$. Thus, in this case, we have $V_{i} \bot R_{Pa_{V_{k}}} |Pa_{V_{i}}$ according to the faithfulness assumption. Then $p(V_{i} |Pa_{V_{i}} )=p(V_{i} |Pa_{V_{i}} ,R_{Pa_{V_{k}}} )$ holds. For the causal direction, with the above result, we have $p(V_{i} |Pa_{V_{i}} ,R_{Pa_{V_{k}}} )=p(V_{i} |Pa_{V_{i}} )P(Pa_{V_{i}} |R_{Pa_{V_{k}}} )$. Therefore, the independence holds between the residuals of the regression of $V_{i}$ on $Pa_{V_{i}}$ and $Pa_{V_{i}}$. For the reverse causal direction, according to \citet{peters2014causal} and the faithfulness assumption, the independence does not hold between residuals and their hypothesis parents variables. Therefore, the ANM is identifiable.
\end{proof}

\subsection{Proof of Theorem 4}
\begin{theorem}[Orientation Rule]
	Given an IN-equivalent pattern represented by a partially directed m-graph $G(\mathbb{V},\mathbf{E} )$, where edges $\mathbf{E}$ may contain directed edges and undirected edges. If the potential non-identifiable paths of $V_i$ is empty, then we orient every undirected neighbor $V_j \in \{V_j|V_j - V_i\}$ as $V_i\to V_j$.
\end{theorem}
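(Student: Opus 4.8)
The plan is to argue by contradiction, turning the hypothesis ``$\mathbf{P}_i=\emptyset$'' into a constraint on the true causal direction via Theorem~\ref{thm:missing_anm_condition}. Fix a node $V_i$ whose set of potential non-identifiable paths is empty and let $V_j$ be any undirected neighbour of $V_i$ in the IN-equivalent pattern. Since the pattern has the same adjacency as the true m-graph, exactly one of $V_i\to V_j$ and $V_j\to V_i$ is the true edge, and I only need to exclude the second possibility; if I do, then orienting $V_i\to V_j$ is correct and, as $V_j$ is arbitrary, the rule is sound.

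Before the main step I would record two facts that follow directly from Definition~\ref{def:in_pattern}. First, every directed edge of the pattern is a true-DAG edge with the same orientation, because the pattern only ever orients a block $Pa_{V_k}\to V_k$ for a true parent set $Pa_{V_k}$. Second, the edges pointing into a node are oriented all-or-nothing: for each $V_k$, either the residual test $Pa_{V_k}\Vbar V_k-f_k(Pa_{V_k})\mid\{R_{V_k},\mathbf{R}_{Pa_{V_k}}\}$ succeeds and every true parent edge of $V_k$ is oriented, or it fails and every edge that is incoming to $V_k$ in the truth is left undirected in the pattern.

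Now assume, for contradiction, that $V_j\to V_i$ is the true edge, so $V_j\in Pa_{V_i}$. Since $V_j - V_i$ is undirected, the second fact forces the residual test at $V_i$ to fail; hence the ANM at $V_i$ with its true parents is not identifiable in the missing data, while it is identifiable in the complete data by our standing assumption on the ANM class, so Theorem~\ref{thm:missing_anm_condition} supplies a directed path $\pi$ in the true m-graph from some $V_p\in Pa_{V_i}$ to $R_{V_p}$ or $R_{V_i}$. By the second fact again, the failed test at $V_i$ leaves $V_p - V_i$ undirected, so $V_p$ is an undirected neighbour of $V_i$. I then claim that $\pi$, read inside the partially directed pattern, is a potential non-identifiable path of $V_i$: each of its edges is directed in the truth, hence (by the first fact) appears in the pattern either as the same arrow or as an undirected edge and never reversed, so the first condition holds; it starts at the undirected neighbour $V_p$ and ends at $R_{V_p}$ or $R_{V_i}$, so the second condition holds; and for any undirected edge $V_k - V_l$ traversed by $\pi$ the truth has $V_k\to V_l$, so a directed path from $V_l$ to $V_k$ in the pattern would be a chain of true-DAG edges (first fact) closing a cycle in the true DAG, which is impossible, so the third condition holds. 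Thus $\pi\in\mathbf{P}_i$, contradicting $\mathbf{P}_i=\emptyset$, and therefore the true edge must be $V_i\to V_j$.

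The step I expect to be the crux is the correctness and use of the second recorded fact: I must be careful that a single failed residual test at $V_i$ simultaneously blocks \emph{all} of $V_i$'s incoming true-parent edges from being oriented, so that the endpoint-parent $V_p$ produced by Theorem~\ref{thm:missing_anm_condition} is guaranteed to be an undirected neighbour of $V_i$ rather than an already-oriented parent; without this, $\pi$ need not satisfy the ``starts at an undirected neighbour'' condition. Everything else — the no-reversal observation and the acyclicity check for the undirected edges on $\pi$ — is short once the directed edges of the pattern are known to be a sound subset of the true DAG.
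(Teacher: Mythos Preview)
Your proposal is correct and follows essentially the same contradiction strategy as the paper: assume $V_j\to V_i$ is true, use the undirected edge to conclude the ANM at $V_i$ is non-identifiable in the missing data, invoke Theorem~\ref{thm:missing_anm_condition} to produce a directed path, and observe that this path lands in $\mathbf{P}_i$. Your version is in fact more careful than the paper's own proof—you explicitly distinguish the starting parent $V_p$ from the original neighbour $V_j$, justify via your ``all-or-nothing'' observation that $V_p$ is itself undirected, and verify all three conditions in the definition of potential non-identifiable path, whereas the paper simply asserts ``such a path must be one of the potential non-identifiable paths'' without checking these details.
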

\begin{proof}
	We prove by contradiction. Without loss of generality, suppose an undirected neighbor $V_j$ of $V_i$ in the IN-equivalent pattern has the causal relationship $V_j\to V_i$ in the ground truth and $V_i$ has no potential non-identifiable paths. 
	
	Since $V_j$ and $V_i$ is not a directed edge in the IN-equivalent pattern, then, based on the definition of IN-equivalent pattern, there must have $Pa_{V_i}\not{\bot} V_i-f(Pa_{V_i})|R_{V_i},\mathbf{R}_{Pa_{V_i}}$, where $V_j\in Pa_{V_i}$, which is a not identifiable ANM in missing data. Based on Theorem \ref{thm:missing_anm_condition}, there must exist a directed path in $G$ that starts from one of the parent $V_j\in Pa_{V_i}$ and ends at missing indicator $R_{V_j}$ or $R_{V_i}$. Such a path must be one of the potential non-identifiable paths which contradicts to no potential non-identifiable paths. This finishes the proof. 
\end{proof}

\subsection{Proof of Theorem 5}
\begin{theorem}[Soundness of LCS-MD]
	Suppose that the data over variables $\mathbf{V}$ was generated by ANM and assumptions 1 $\sim$ 3 in \cite{tu2019causal}, assumptions of weak self-masking missingness and structural condition hold. Let $G$ denote the output of LCS-MD, all directed edge in $G$ is correctly oriented.
\end{theorem}
\begin{proof}
	Due to our identifiability result of causal direction is based on the correct causal skeleton and causal relationship of missing indicator, we first prove the correctness of SM-MVPC.
	
	\paragraph{The correctness of SM-MVPC} As shown in Algorithm 1, the causal skeleton can be reconstructed correctly by testing CI relations among the observed variables, which is already proved by \cite{spirtes2000causation}. According to Prop. \ref{lemma:recover} and Theorem 1, one can test the CI relations in the recovered distribution such that the CI relations are consistent with the original distribution. Thus, the causal skeleton can be learned correctly. 
	
	To determine the missing indicator, one can identify the missing indicator by the method provided in Lemma 1 (Line 2 $\sim$ 3 in Algorithm 1), and the correctness is proved by Theorem 2. By this, the correctness of SM-MVPC is proved.
	
	\paragraph{The correctness of causal direction} Now, we prove the correctness of the identified causal direction. For the causal direction oriented by Line 1 $\sim$ 4 in LCS-MD, the correctness is ensured by the identifiability of ANM in missing data (Theorem 3). In line 5 $\sim$ 6, the causal direction is identified by orientation rules, which also is proved in Theorem 4.
	
	According to the above proofs, we finish the proof of the correctness of causal directions in the output of LCS-MD.
	
\end{proof}

\section{Supplementary experiments}
The main paper has shown the F1 scores and other baselines in synthetic experiments. Here, we further provide the Recall, Precision, and SHD metrics for all these experiments in Figure \ref{fig:sensitivity_recall}$\sim$\ref{fig:rule_full}. 

One can see that our method has better precision and recall than baselines in all cases. 

\begin{figure*}[h!]
	\centering
	\subfigure[Sensitivity of sample size]{
		\includegraphics[width=0.3\textwidth]{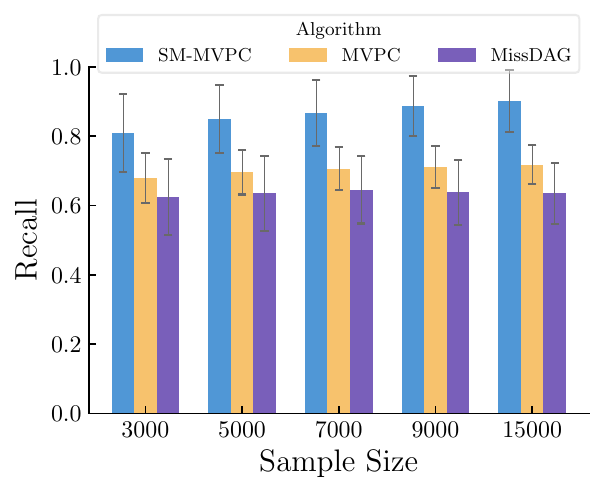}
		\label{fig:ske:Sample_SizelinearSMNARRecall}
	}
	\subfigure[Sensitivity of the weak self-masking]{
		\includegraphics[width=0.3\textwidth]{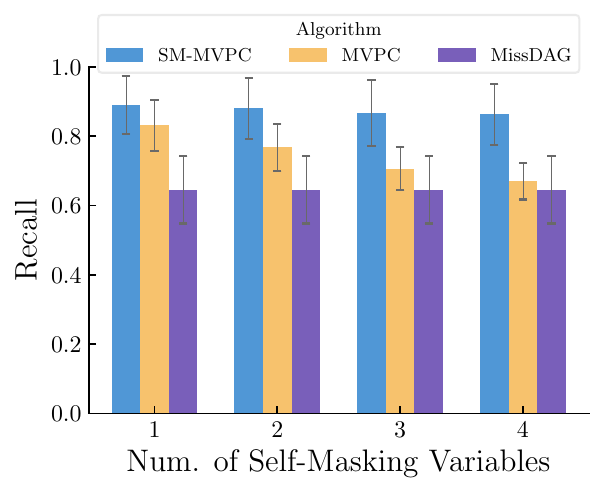}
		\label{fig:ske:Num_of_self_nodelinearSMNARRecall}
	}
	\caption{Recall in Experiments for Skeleton Learning}	
	\label{fig:sensitivity_recall}
\end{figure*}

\begin{figure*}[h!]
	\centering
	\subfigure[Sensitivity of sample size]{
		\includegraphics[width=0.3\textwidth]{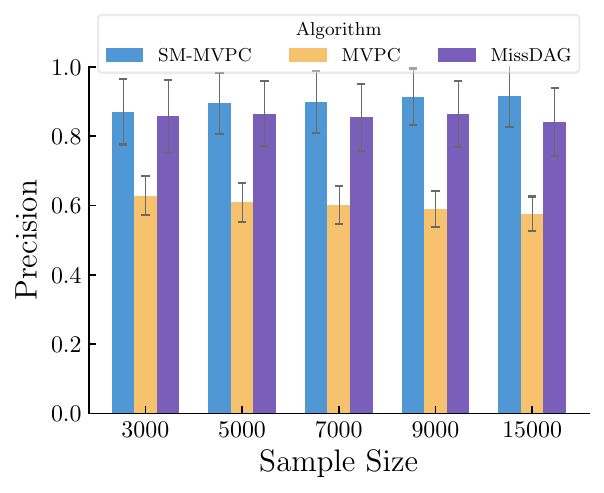}
		\label{fig:ske:Sample_SizelinearSMNARPrecision}
	}
	\subfigure[Sensitivity of the weak self-masking]{
		\includegraphics[width=0.3\textwidth]{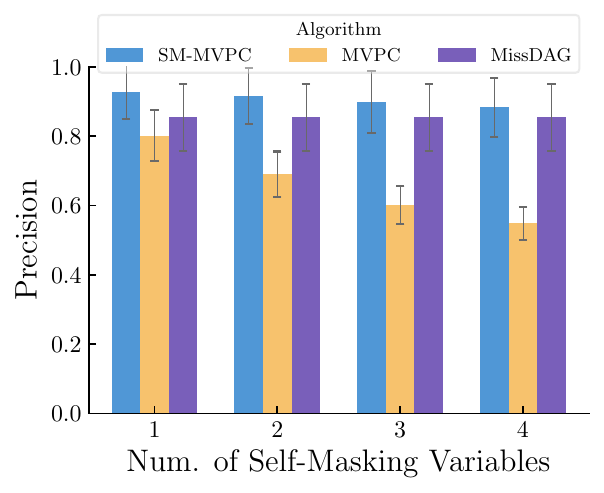}
		\label{fig:ske:Num_of_self_nodelinearSMNARPrecision}
	}
	\caption{Precision in Experiments for Skeleton Learning}	
	\label{fig:sensitivity_precision}
\end{figure*}

\begin{figure*}[h!]
	\centering
	\subfigure[Sensitivity of sample size]{
		\includegraphics[width=0.3\textwidth]{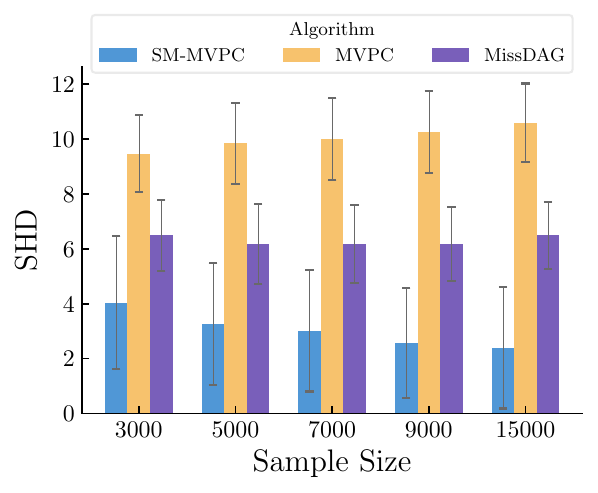}
		\label{fig:ske:Sample_SizelinearSMNARSHD}
	}
	\subfigure[Sensitivity of the weak self-masking]{
		\includegraphics[width=0.3\textwidth]{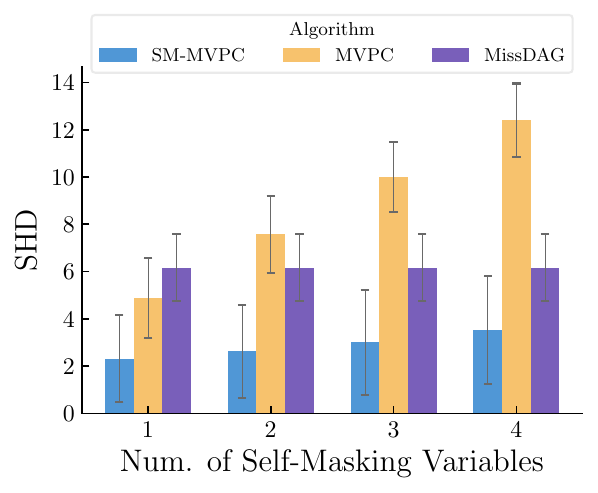}
		\label{fig:ske:Num_of_self_nodelinearSMNARSHD}
	}
	\caption{SHD in Experiments for Skeleton Learning}	
	\label{fig:sensitivity_shd}
\end{figure*}

\begin{figure*}[h!]
	\centering
	\subfigure[F1]{
		\includegraphics[width=0.3\textwidth]{images/2024/ruleF1.pdf}
	}
	\subfigure[Recall]{
		\includegraphics[width=0.3\textwidth]{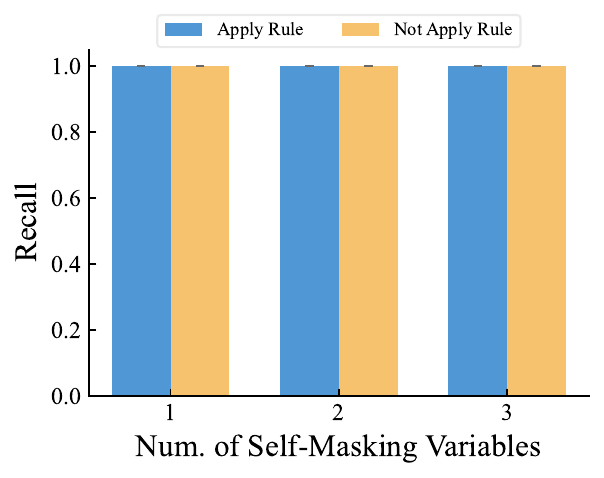}
	}
	\subfigure[Precision]{
		\includegraphics[width=0.3\textwidth]{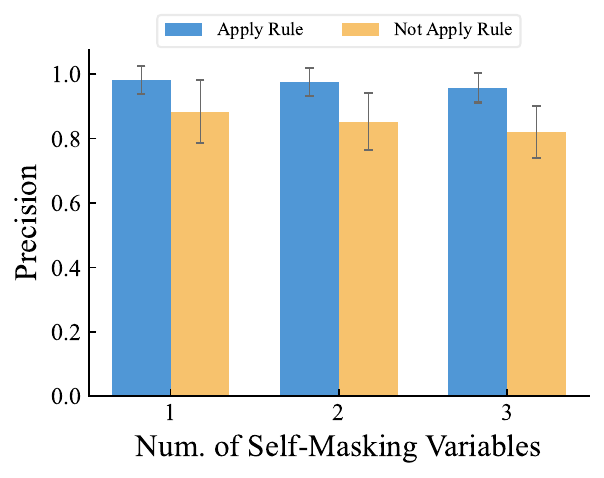}
	}
	\caption{Applying rule with correct skeleton}
	\label{fig:rule_full}
\end{figure*}

%\bibliography{aaai24.bib}

\end{document}